%% file: main.tex
\documentclass{article}

\usepackage{booktabs}

 \usepackage[dblblindworkshop, final]{neurips_2026}
\workshoptitle{Geometric Distributional Deep Learning (GDDL)}

\usepackage[utf8]{inputenc} 
\usepackage[T1]{fontenc}    
\usepackage{hyperref}       
\usepackage{url}            
\usepackage{booktabs}       
\usepackage{amsfonts}       
\usepackage{nicefrac}       
\usepackage{microtype}      
\usepackage{xcolor}         
\usepackage{amsmath}
\usepackage{amssymb}
\usepackage{amsthm}
\usepackage{graphicx}
\usepackage{xcolor}
\usepackage{tikz}
\usepackage{wrapfig}
\usepackage{algorithm}
\usepackage{algpseudocode}

\usetikzlibrary{arrows.meta,positioning}

\newtheorem{theorem}{Theorem}
\newtheorem{proposition}[theorem]{Proposition}
\newtheorem{lemma}[theorem]{Lemma}

\input{math_commands.tex}

\usepackage{mathtools}
\usepackage{bm}

\usepackage{booktabs}
\usepackage{graphicx}
\usepackage{xcolor}

\usepackage{algorithm}
\usepackage{algpseudocode}

\usepackage{microtype}
\usepackage{xspace}
                                                      
\usepackage{hyperref}                                    \usepackage{cleveref}

\theoremstyle{plain}
\newtheorem{corollary}[theorem]{Corollary}

\theoremstyle{definition}

\theoremstyle{remark}

\usepackage{hyperref}
\usepackage{url}

\title{Discrete Wasserstein Flows for One-Step Generative Modeling}

\author{%
  Alessandro Micheli\\
  Imperial College London \\
  London, United Kingdom \\
 \texttt{a.micheli19@imperial.ac.uk} \\
  \And
  Andrea Zerio \\
  Department of Computer Science, Aalborg University,
  Copenhagen, Denmark;\\
  Centre for Frontier AI Research (CFAR),\\
  Institute of Advanced Intelligence and Computing (IAIC),\\
  A*STAR, Singapore
  \texttt{anze@cs.aau.dk} \\
  \And
  Samir Bhatt \\
  Imperial College London; University of Copenhagen \\
  London, United Kingdom; Copenhagen, Denmark \\
  \texttt{s.bhatt@imperial.ac.uk} \\
}

\begin{document}

\maketitle

\input{sections/abstract}
\input{sections/intro}
\input{sections/background}
\input{sections/method}
\input{sections/experiments}

\input{sections/conclusion}

\newpage

\bibliographystyle{plainnat}
\bibliography{references}

\newpage

\appendix
\input{sections/appendix}

\end{document}

%% file: math_commands.tex
\usepackage{amsmath,amsfonts,bm}

\newcommand{\X}{\mathcal{X}}

\def\eqref#1{equation~\ref{#1}}

\def\1{\bm{1}}

\DeclareMathAlphabet{\mathsfit}{\encodingdefault}{\sfdefault}{m}{sl}
\SetMathAlphabet{\mathsfit}{bold}{\encodingdefault}{\sfdefault}{bx}{n}



%% file: sections/abstract.tex
\begin{abstract}
We introduce a new framework for one-step generative modelling on finite state spaces. To extend drifting beyond continuous domains, we use  discrete Wasserstein geometry to define a target-relative KL gradient flow over the transitions of a reversible Markov kernel. We realize this probability flow at the particle level through Markov jumps and amortize the resulting transport updates into a latent-conditioned generator, so that the iterative dynamics are required only during training while inference remains one-step. In a controlled setting where the underlying distributions and transport dynamics can be computed exactly, we verify KL dissipation, consistency between the particle dynamics and the probability flow, and the predicted numerical scaling. We further show that a finite-capacity neural generator can track these exact transport targets while retaining one-step generation. These results validate the basic construction and provide a foundation for scaling Discrete Drifting to structured discrete data.
\end{abstract}

%% file: sections/intro.tex
\section{Introduction}
\label{sec:introduction}

Diffusion and flow-based generators construct samples through iterative inference-time transport, whereas Drifting Models \citep{deng2026drifting} amortize this evolution into training, enabling one-step generation. Extending this idea to discrete data is not straightforward. In Euclidean space, drifting relies on infinitesimal particle displacements 
\(x \mapsto x + \eta V(x)\); on a finite state space \(\mathcal{X} = \{x_1,\ldots,x_N\}\), however, 
particles cannot move infinitesimally and probability must instead be transferred between distinct states. This obstruction is also geometric: when the underlying state space is finite, the ordinary Wasserstein-$2$ metric does not induce a useful infinitesimal transport geometry on the corresponding space of probability distributions. Indeed, transporting $O(\delta t)$ mass across a positive distance produces \(W_2 = O(\sqrt{\delta t})\),
yielding a divergent metric derivative as $\delta t \to 0$ \citep{Maas2011}. A discrete analogue of drifting therefore requires both an appropriate transport geometry and a particle dynamics compatible with it.

We build both from the discrete Wasserstein geometry of \citet{Maas2011}. An irreducible, reversible Markov kernel $K$ defines the admissible local transitions on $\mathcal{X}$, while the resulting geometry provides a notion of gradient flow for probability distributions on the finite state space. Choosing \(\mathcal{F}_p(q) = \mathrm{KL}(q \Vert p), \)
with $p$ the data distribution, yields a flow that redistributes probability along the transitions of $K$ so as to decrease the target-relative KL. We interpret this redistribution at the particle level through a Markov jump process, providing the discrete counterpart of the continuous drift $V(x)$. Crucially, these jumps are used only during training to construct updated targets: unlike discrete diffusion models \citep{campbell2022,lou2024discrete}, generation does not require simulating a Markov process at inference time. Instead, a latent-conditioned generator is trained to absorb the successive transport updates and directly produce samples in a single forward pass.

Our contributions are threefold: (1) we formulate drifting on finite state spaces using discrete Wasserstein geometry; (2) we derive a Markov-jump particle realization of the resulting probability flow that can be used to construct stop-gradient training targets; and (3) we provide a controlled empirical validation that separately tests the underlying discrete transport and its neural amortization.

%% file: sections/background.tex
\section{Background}
\label{sec:background}

We consider probability evolutions on a finite space
\(\X = \{x_1,\ldots,x_N\},\)
equipped with a Markov transition kernel
\(K\in\mathbb{R}^{N\times N}\), that is, a non-negative, row-stochastic real matrix. We assume throughout that \(K\) is irreducible and reversible. In
particular, irreducibility guarantees the existence of a unique invariant
probability measure \(\pi\in\mathbb{R}^{N}\), with strictly positive
entries, satisfying $\pi^\top K = \pi^\top.$
We denote by $\mathcal{P}(\X)$ the set of probability densities on \(\X\) relative to \(\pi\). We further
write $\mathcal{P}_{*}(\X)$ for the subset of strictly positive densities. Throughout the paper, the symbol \(\rho\) is used for densities relative to
\(\pi\), whereas \(p\) denotes ordinary probability masses. The two
representations are related by
\begin{equation}
\label{eq:density-probability-relation}
    p(x)=\pi(x)\rho(x),
    \qquad x\in\X,
\end{equation}
or, equivalently, $p = \rho\odot\pi$, where \(\odot\) denotes componentwise multiplication.

A canonical example of the probability evolutions considered in this work
is the discrete heat equation
\begin{equation}
\label{eq:discrete-heat-equation}
    \frac{\mathrm{d}}{\mathrm{d}t}\rho_t
    =
    (K-I)\rho_t.
\end{equation}
This dynamics is widely used as a forward noising process in discrete
generative modelling~\citep{campbell2022,lou2024discrete}. Our objective is to interpret evolutions such as
\eqref{eq:discrete-heat-equation} as gradient flows of functionals defined
on \(\mathcal{P}(\X)\), in analogy with variational constructions on
continuous spaces~\citep{Jordan1998,bunne22proximal,
terpin2024learning}. In the
Euclidean setting, probability evolutions can be characterized as gradient
flows of an energy functional \(\mathcal{F}\) with respect to the
\(2\)-Wasserstein geometry. The Jordan--Kinderlehrer--Otto (JKO) scheme
describes this evolution through the proximal recursion
\begin{equation}
\label{eq:jko-scheme-background}
    p_{k+1}
    \in
    \operatorname*{arg\,min}_{p\in\mathcal{P}(\mathbb{R}^{d})}
    \left\{
        \mathcal{F}(p)
        +
        \frac{1}{2\tau}
        W_2^2(p,p_k)
    \right\}.
\end{equation}
A common choice is to consider \(\mathcal{F}\) of the
free-energy form
\begin{equation}
\label{eq:free-energy-form}
    \mathcal{F}
    =
    \mathcal{V}
    +
    \beta\mathcal{H},
\end{equation}
where \(\mathcal{V}\) is a potential energy, \(\mathcal{H}\) is an entropy
functional, and \(\beta\geq 0\) controls the strength of the entropic
contribution.

The standard \(W_2\) distance does not provide a suitable gradient-flow geometry on finite state spaces. This limitation appears even in the simplest non-trivial case of a two-point state space; see~\cite[Lemma 2.1]{RML2026} and~\cite[Remark 2.1]{Maas2011}.
To address this limitation, the transport geometry of~\cite{Maas2011} provides the basis for a discrete analogue of the JKO framework by replacing \(W_2\) with a metric adapted to the transition structure of the Markov kernel \(K\). The construction of this metric parallels the dynamic formulation of
optimal transport due to~\cite{Benamou2000}. Its central ingredient is a \emph{mobility} associated with each edge of the graph induced by \(K\), which determines how the cost of transporting mass between \(x,y\in\X\) depends on their current densities. In this work we use
the logarithmic mean
\[
m(a,b)
:=
\int_0^1 a^s b^{1-s},\mathrm ds,
\]
and define the corresponding mobility by
\[
m_\rho(x,y)
:=
m\bigl(\rho(x),\rho(y)\bigr).
\]
This choice is particularly natural here because, with the corresponding
transport metric, the discrete heat equation in 
\eqref{eq:discrete-heat-equation} is recovered as the gradient flow of
entropy.
The logarithmic mean is one of several possible symmetric mobilities; alternatives include the geometric mean and suitable power-law mobilities~\citep{Maas2011,maaserbar2014,Erbar2018}. The mobility is therefore a modelling choice that determines which probability evolution is represented as a gradient flow under the resulting transport geometry. 

Using the logarithmic-mean mobility, \citet{Maas2011} defines a dynamic
transport distance \(W_K\) on \(\mathcal P(\X)\) by minimizing a discrete
kinetic action over paths connecting two probability densities. Specifically,
for \(\rho_0,\rho_1\in\mathcal P(\X)\), consider piecewise continuously
differentiable curves
\(\rho:[0,1]\to\mathcal P(\X)\) and measurable curves
\(\psi:[0,1]\to\mathbb R^N\) satisfying, for almost every
\(t\in[0,1]\), the discrete continuity equation
\begin{equation}
\label{eq:discrete-continuity-equation}
\dot\rho_t(x)
+
\sum_{y\in\X}
\bigl(\psi_t(y)-\psi_t(x)\bigr)
K(x,y)
m_{\rho_t}(x,y)
=
0,
\qquad x\in\X,
\end{equation}
with boundary conditions
\(\rho_{t=0}=\rho_0\) and \(\rho_{t=1}=\rho_1\).
The squared transport distance is then defined as the minimum action over
all such admissible pairs:
\begin{equation}
\label{eq:discrete-transport-distance}
\begin{aligned}
W_K(\rho_0,\rho_1)^2
:=
\inf_{\rho,\psi}
\Bigg\{
\frac{1}{2}
\int_0^1
\sum_{x,y\in\X}
&\bigl(\psi_t(x)-\psi_t(y)\bigr)^2
K(x,y)
m_{\rho_t}(x,y)
\pi(x)
\,\mathrm{d}t
\Bigg\}.
\end{aligned}
\end{equation}

The quantity in \eqref{eq:discrete-transport-distance} is the discrete
counterpart of the kinetic action in the Benamou--Brenier formulation.
Accordingly, \(W_K(\rho_0,\rho_1)^2\) is the minimum kinetic energy required
to transport \(\rho_0\) into \(\rho_1\), subject to conservation of mass
through \eqref{eq:discrete-continuity-equation}. The difference
\(\psi_t(x)-\psi_t(y)\) plays the role of a velocity along the edge
connecting \(x\) and \(y\), while \(K(x,y)m_{\rho_t}(x,y)\pi(x)\)
determines the corresponding density-dependent transport weight. ~\cite{Maas2011} shows that \(W_K\) defines a transport metric on
\(\mathcal P(\X)\) and induces a Riemannian structure on its interior
\(\mathcal P_*(\X)\). Crucially, when the mobility \(m_\rho\) is chosen to
be the logarithmic mean, the discrete heat
\eqref{eq:discrete-heat-equation} is the gradient flow, with respect to
\(W_K\), of the relative entropy
\begin{equation}
\label{eq:relative-entropy-discrete}
\mathcal H(\rho)
:=
\sum_{x\in\X}
\rho(x)\log\rho(x)\,\pi(x),
\end{equation}
defined relative to the invariant measure \(\pi\), with the convention
\(0\log 0=0\). 

%% file: sections/method.tex
a\section{Discrete Drifting on Finite Spaces}
\label{sec:discrete-drifting}

Section~\ref{sec:background} reviewed the discrete Wasserstein geometry of
\citet{Maas2011}, under which the heat equation
(\eqref{eq:discrete-heat-equation}) is the \(W_K\)-gradient flow of the relative
entropy
\(\mathcal H(\rho_q)=\mathrm{KL}(q\,\|\,\pi)\).
Consequently, the heat flow relaxes the model distribution \(q\) toward the
invariant measure \(\pi\). For generative modelling, however, the desired
equilibrium is the data distribution \(p\), which need not coincide with
\(\pi\). We therefore replace the entropy relative to \(\pi\) by the
target-relative functional
\begin{equation}
\label{eq:target-relative-entropy}
    \mathcal F_p(\rho_q)
    :=
    \sum_{x\in\X}
        \rho_q(x)
        \log\frac{\rho_q(x)}{\rho_p(x)}
        \,\pi(x)
    =
    \mathrm{KL}(q\,\|\,p),
\end{equation}
where \(\rho_p:=p/\pi\) and \(\rho_q:=q/\pi\). Throughout this section we
assume \(\rho_p,\rho_q\in\mathcal P_*(\X)\), so that the logarithmic density
ratios are well defined.  
This is an entropy-plus-potential free energy with
\(V_p=-\log\rho_p\), so the finite-state gradient calculus of
\citet{Maas2011} applies directly. The corresponding \(W_K\)-gradient flow
is the discrete continuity equation with potential
\(\psi_{p,q} = \log(\rho_p/\rho_q)\). Equivalently, defining the edge current
\begin{equation}
\label{eq:conductance-and-current}
    J_{p,q}(x,y)
    :=
    \pi(x)K(x,y)m_{\rho_q}(x,y)
    \left(
        \log\frac{\rho_q(x)}{\rho_p(x)}
        -
        \log\frac{\rho_q(y)}{\rho_p(y)}
    \right),
\end{equation}
the induced evolution of probability masses is
\begin{equation}
\label{eq:target-kl-gradient-flow}
    \dot q_t(x)
    =
    -\sum_{y\in\X}J_{p,q_t}(x,y).
\end{equation}
By reversibility of \(K\) and symmetry of the logarithmic mean,
\(J_{p,q}(x,y)=-J_{p,q}(y,x)\), so the flow conserves total probability.

The standard gradient-flow dissipation identity specializes here to
\begin{equation}
\label{eq:kl-dissipation}
    \frac{\mathrm d}{\mathrm dt}
    \mathrm{KL}(q_t\,\|\,p)
    =
    -\mathcal I_K(q_t\,\|\,p),
\end{equation}
where
\begin{equation}
\label{eq:relative-fisher-information}
    \mathcal I_K(q\,\|\,p)
    :=
    \frac{1}{2}
    \sum_{x,y\in\X}
        \pi(x)K(x,y)m_{\rho_q}(x,y)
        \left(
            \log\frac{\rho_q(x)}{\rho_p(x)}
            -
            \log\frac{\rho_q(y)}{\rho_p(y)}
        \right)^2 .
\end{equation}
Since \(K\) is irreducible and \(\rho_p,\rho_q>0\),
\(\mathcal I_K(q\,\|\,p)=0\) only when
\(\log(\rho_q/\rho_p)\) is constant on \(\X\); normalization then implies
\(q=p\). Thus \(p\) is the unique stationary point of
\eqref{eq:target-kl-gradient-flow} in \(\mathcal P_*(\X)\). These are direct consequences of the Maas
gradient-flow framework; we use \eqref{eq:target-kl-gradient-flow} as the
distribution-level evolution to be realized by discrete drifting.

\subsection{A Normalization-Free Drift in the Logit Chart}
\label{sec:logit-chart-drift}

The gradient flow in \eqref{eq:target-kl-gradient-flow} cannot be used as a training
signal in the form given: it is written in terms of the normalized laws \(q\) and
\(p\), whereas a parametric model emits unnormalized logits and the target is
typically known only up to a constant. We now show that the flow admits an exact
representation involving only \emph{local differences} of unnormalized
log-densities, and is therefore free of both normalizing constants.

Let \(g:\X\to\mathbb R\) be a log-density parameter and set
\begin{equation}
\label{eq:pi-logit-parameterization}
    q_g(x)
    :=
    \frac{\pi(x)e^{g(x)}}{Z_\pi(g)},
    \qquad
    Z_\pi(g)
    :=
    \sum_{z\in\X}\pi(z)e^{g(z)},
    \qquad
    \rho_g(x)
    :=
    \frac{q_g(x)}{\pi(x)}
    =
    \frac{e^{g(x)}}{Z_\pi(g)} .
\end{equation}
For an oriented edge \((x,y)\), define the model log-density
difference and the target edge score
\begin{equation}
\label{eq:model-and-target-edge-scores}
    \delta_g(x,y)
    :=
    g(y)-g(x),
    \qquad
    s_p(x,y)
    :=
    \log\rho_p(y)-\log\rho_p(x) .
\end{equation}
Both depend only on local differences and are therefore independent of the
corresponding normalization constants; in particular, the target need only
be known up to a multiplicative constant. To express the logarithmic-mean
mobility in terms of these differences, define
\begin{equation} 
\label{eq:logmean-ratio-function}
L(r) := m(1,e^r) = \begin{cases} \displaystyle\frac{e^r-1}{r}, & r\neq0,\\[0.8em] 1, & r=0. 
\end{cases} 
\end{equation}

Transporting the flow of \eqref{eq:target-kl-gradient-flow} to this
chart yields a drift that depends
only on edge quantities. Define the \emph{normalization-free logit drift}
\begin{equation}
\label{eq:normalization-free-logit-drift}
    \Delta_g(x)
    :=
    \sum_{y\in\X}
        K(x,y)
        L\bigl(\delta_g(x,y)\bigr)
        \bigl(
            \delta_g(x,y)-s_p(x,y)
        \bigr),
\end{equation}
a \(K\)-average over the neighbours of \(x\) of the model-minus-target edge
discrepancy \(\delta_g-s_p\), weighted by the logarithmic-mean factor \(L\).
Every quantity on the right-hand side is computable from logits and from the
target up to normalization.

\begin{proposition}[Logit representation of the flow]
\label{prop:normalization-free-flow}
Let \(I\subseteq\mathbb R\) be an interval, let
\(t\mapsto g_t\in\mathbb R^\X\) be continuously differentiable, and set
\(q_t:=q_{g_t}.\) Then \(q_t\) solves the probability-coordinate gradient flow in~\eqref{eq:target-kl-gradient-flow} if and only if there exists a continuous
scalar function \(c:I\to\mathbb R\) such that
\begin{equation}
\label{eq:gauge-covariant-logit-flow}
    \dot g_t
    =
    \Delta_{g_t}
    +
    c_t\mathbf 1,
\end{equation}
where \(\mathbf 1(x)=1\) for every \(x\in\X\).
\end{proposition}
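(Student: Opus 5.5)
The proof is a direct change of variables: compute the flow \eqref{eq:target-kl-gradient-flow} in the logit chart and match it with $\dot g_t$. The only degree of freedom the chart cannot see is the constant mode $\mathbf 1$, since $q_{g+c\mathbf 1}=q_g$.

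\textbf{Step 1: the current in logit coordinates.} Write $\rho_t=\rho_{g_t}=e^{g_t}/Z_\pi(g_t)$.

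By $1$-homogeneity of the logarithmic mean,
\[
m_{\rho_t}(x,y)=\rho_t(x)\,m\bigl(1,\rho_t(y)/\rho_t(x)\bigr)=\rho_t(x)\,L\bigl(\delta_{g_t}(x,y)\bigr),
\]
because $\rho_t(y)/\rho_t(x)=e^{g_t(y)-g_t(x)}$ and $Z_\pi$ cancels.

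In the same way,
\[
\log\frac{\rho_t(x)}{\rho_p(x)}-\log\frac{\rho_t(y)}{\rho_p(y)}=-\delta_{g_t}(x,y)+s_p(x,y).
\]
Hence $J_{p,q_t}(x,y)=-\pi(x)\rho_t(x)K(x,y)L(\delta_{g_t})(\delta_{g_t}-s_p)$. Summing over $y$ and using $q_t(x)=\pi(x)\rho_t(x)$ turns \eqref{eq:target-kl-gradient-flow} into
\[
\dot q_t(x)=q_t(x)\,\Delta_{g_t}(x).
\]

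\textbf{Step 2: the time derivative of $q_{g_t}$.} Differentiating \eqref{eq:pi-logit-parameterization} gives
\[
\dot q_t(x)=q_t(x)\bigl(\dot g_t(x)-\langle \dot g_t\rangle_{q_t}\bigr),\qquad \langle f\rangle_{q}=\sum_z q(z)f(z).
\]
So $q_t$ solves the flow if and only if
\[
q_t(x)\bigl(\dot g_t(x)-\langle\dot g_t\rangle_{q_t}-\Delta_{g_t}(x)\bigr)=0 \quad\text{for all }x.
\]
Since $q_t>0$ in this chart, this holds exactly when $\dot g_t-\Delta_{g_t}$ is the constant vector $\langle \dot g_t\rangle_{q_t}\mathbf 1$.

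\textbf{Step 3: both directions.}

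($\Rightarrow$) Set $c_t:=\langle\dot g_t\rangle_{q_t}$. This is continuous because $g\in C^1$ and $q_{g_t}$ depends continuously on $t$. One consistency check is needed: $\langle \Delta_{g_t}\rangle_{q_t}=0$. It follows from Step 1 and the antisymmetry of $J$ noted after \eqref{eq:target-kl-gradient-flow}, since $\sum_x q_t\Delta_{g_t}=-\sum_{x,y}J=0$. Therefore $c_t$ is exactly the constant forced in Step 2.

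($\Leftarrow$) Suppose $\dot g_t=\Delta_{g_t}+c_t\mathbf 1$. Then
\[
\langle\dot g_t\rangle_{q_t}=\langle\Delta_{g_t}\rangle_{q_t}+c_t=c_t,
\]
and Step 2 yields $\dot q_t=q_t\Delta_{g_t}$, which is the flow by Step 1.

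\textbf{Main obstacle.} The only real care needed is in Step 1. The mobility must be factored through homogeneity, and the $L(r)$ formula must be checked against the definition of $m$: $m(1,e^r)=\int_0^1 e^{r(1-s)}ds=(e^r-1)/r$, with the correct orientation $\delta=g(y)-g(x)$. The sign conventions of $J$ must then be matched so that $\Delta$ appears with a plus sign. Everything else is routine bookkeeping.
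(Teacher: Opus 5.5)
Your proof is correct and follows the paper's argument essentially step for step. You use the logit form of the current via $1$-homogeneity of the logarithmic mean, the chart differential $\dot q_t=q_t(\dot g_t-\langle\dot g_t\rangle_{q_t})$ with $c_t:=\langle\dot g_t\rangle_{q_t}$ for the forward direction, and the centering identity $\langle\Delta_{g_t}\rangle_{q_t}=0$ (from antisymmetry of $J$) for the converse; the centering check is not actually needed in the forward direction, since Step 2 already gives $\dot g_t-\Delta_{g_t}=\langle\dot g_t\rangle_{q_t}\mathbf 1$.
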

The proof is given in Appendix~\ref{app:discrete-drifting-proofs}.

The drift \eqref{eq:normalization-free-logit-drift} splits into two terms with
distinct roles,
\begin{equation}
\label{eq:drift-attraction-repulsion-split}
    \Delta_g(x)
    =
    \underbrace{
        \sum_{y\in\X} K(x,y) L\bigl(\delta_g(x,y)\bigr)\,\delta_g(x,y)
    }_{\text{model repulsion}}
    \;-\;
    \underbrace{
        \sum_{y\in\X} K(x,y) L\bigl(\delta_g(x,y)\bigr)\,s_p(x,y)
    }_{\text{target attraction}} .
\end{equation}
The first
term depends only on the current model through \(\delta_g\); where the model
already favours \(y\) (\(\delta_g(x,y)>0\)), it raises \(g(x)\), opposing further
concentration there. The second term is the target attraction: where the target favours a neighbour
\(y\) (\(s_p(x,y)>0\)), it lowers \(g(x)\), moving mass toward \(y\).  This is the finite-space analogue of the
attraction--repulsion structure of Euclidean drifting.
\subsubsection{Stop-gradient realization}
\label{sec:discrete-stop-gradient}

Proposition~\ref{prop:normalization-free-flow} determines the logit dynamics
up to an additive gauge. Since \( q_{g+c\mathbf 1} = q_g \) for every scalar \(c\), all gauge choices induce the same probability law.
We therefore choose the zero-gauge representative
\[
    \dot g_t
    =
    \Delta_{g_t}.
\]
Its explicit-Euler discretization with step size \(\eta>0\) is
\begin{equation}
\label{eq:logit-euler-update}
    g^+
    =
    g+\eta\Delta_g.
\end{equation}
The next proposition shows that this chart update is first-order consistent
with the probability-coordinate gradient flow and reproduces its
instantaneous relative-entropy dissipation to first order.

\begin{proposition}[Local consistency and relative-entropy descent]
\label{prop:chart-euler-consistency}
For fixed \(g\), as \(\eta\to0\),
\begin{equation}
\label{eq:chart-law-local-consistency}
    q_{g+\eta\Delta_g}
    =
    q_g+\eta\,q_g\odot\Delta_g
    +
    O(\eta^2),
\end{equation}
and
\begin{equation}
\label{eq:chart-kl-local-descent}
    \mathrm{KL}
    \bigl(
        q_{g+\eta\Delta_g}
        \,\|\,p
    \bigr)
    =
    \mathrm{KL}(q_g\,\|\,p)
    -
    \eta\,\mathcal I_K(q_g\,\|\,p)
    +
    O(\eta^2).
\end{equation}
In particular, whenever \(q_g\neq p\), the update strictly decreases
\(\mathrm{KL}(q_g\,\|\,p)\) for every sufficiently small positive
\(\eta\).
\end{proposition}

The proof is given in Appendix~\ref{app:discrete-drifting-proofs}. The update \eqref{eq:logit-euler-update} is realized by regression onto a frozen
target. Writing \(\bar g\) for the current iterate, define
\begin{equation}
\label{eq:full-batch-stop-gradient-loss}
    \mathcal L_{\mathrm{sg}}(g;\bar g)
    :=
    \frac{1}{2}
    \sum_{x\in\X}
    \Bigl(
        g(x)
        -
        \operatorname{sg}
        \bigl[
            \bar g(x)+\eta\Delta_{\bar g}(x)
        \bigr]
    \Bigr)^2 .
\end{equation}
As in Euclidean drifting, the stop-gradient is what makes this a discretization of the flow rather than an
ordinary loss. Propagating gradients through the target, with the same variable
\(g\) appearing on both sides, collapses the objective to
\begin{equation}
\label{eq:no-stop-gradient-objective}
    \mathcal L_{\mathrm{no\text{-}sg}}(g)
    :=
    \frac{1}{2}
    \bigl\|
        g-\bigl(g+\eta\Delta_g\bigr)
    \bigr\|_2^2
    =
    \frac{\eta^2}{2}
    \bigl\|\Delta_g\bigr\|_2^2 .
\end{equation}
Gradient descent on \(\mathcal L_{\mathrm{no\text{-}sg}}\) minimizes the squared
magnitude of the drift through derivatives of \(\Delta_g\); it does not
implement the vector field \(\Delta_g\) itself.

\subsection{A Minimal Particle Realization}
\label{sec:minimal-particle-lift}

The flow in~\eqref{eq:target-kl-gradient-flow} is Eulerian: it specifies the
net change of probability mass at each state, but does not prescribe how
individual particles should move. On a finite state space, the natural
particle analogue of a Euclidean displacement is a Markov jump. Our goal is
therefore to construct nonnegative jump rates whose directional probability
traffic reproduces the signed current \(J_{p,q}\).

For strictly positive probability mass functions \(p,q\) on \(\X\), define
\[
    \delta_q(x,y)
    :=
    \log\rho_q(y)-\log\rho_q(x),
    \qquad
    \Delta_{p,q}(x)
    :=
    -\frac{1}{q(x)}
    \sum_{y\in\X}J_{p,q}(x,y).
\]
Using the positive \(1\)-homogeneity of the logarithmic mean, the current
in~\eqref{eq:conductance-and-current} admits the equivalent representation
\begin{equation}
\label{eq:law-level-current}
    J_{p,q}(x,y)
    =
    q(x)K(x,y)
    L\bigl(\delta_q(x,y)\bigr)
    \bigl[s_p(x,y)-\delta_q(x,y)\bigr].
\end{equation}
Since jump rates must be nonnegative, we assign to each oriented edge the
positive directional part of this current,
\begin{equation}
\label{eq:minimal-jump-rate}
    a_{p,q}(x,y)
    :=
    \frac{[J_{p,q}(x,y)]_+}{q(x)}
    =
    K(x,y)L\bigl(\delta_q(x,y)\bigr)
    \bigl[s_p(x,y)-\delta_q(x,y)\bigr]_+,
    \qquad x\neq y .
\end{equation}
These rates define the generator
\begin{equation}
\label{eq:particle-generator}
    A_{p,q}(x,y)
    =
    \begin{cases}
        a_{p,q}(x,y), & x\neq y,\\[0.3em]
        -\displaystyle\sum_{z\neq x}a_{p,q}(x,z), & x=y.
    \end{cases}
\end{equation}
Its off-diagonal entries are nonnegative and its rows sum to zero, so for
fixed \(q\) it is a continuous-time Markov generator. Moreover,
antisymmetry of \(J_{p,q}\) yields
\begin{equation}
\label{eq:particle-generator-law-vector}
    (qA_{p,q})(x)
    =
    -\sum_{y\in\X}J_{p,q}(x,y)
    =
    q(x)\Delta_{p,q}(x).
\end{equation}
Thus, when the rates are recomputed from the evolving law, the nonlinear
Kolmogorov equation
\[
    \dot q_t=q_tA_{p,q_t}
\]
coincides exactly with the relative-KL gradient flow
in~\eqref{eq:target-kl-gradient-flow}. Markov-jump realizations of
graph-based probability flows are standard
\citep{Chow2011,sun2023discrete}; the construction
in~\eqref{eq:minimal-jump-rate} is the positive-part realization of the
Maas current.

Discrete drifting requires only a single step of this dynamics. We therefore
freeze the rates at the current law \(q\) and take a forward-Euler step,
\begin{equation}
\label{eq:particle-transition-kernel}
    P_{\eta,p,q}
    :=
    I+\eta A_{p,q}.
\end{equation}
Under the step-size condition
\begin{equation}
\label{eq:particle-cfl-condition}
    \eta
    \max_{x\in\X}
    \sum_{y\neq x}a_{p,q}(x,y)
    \leq 1,
\end{equation}
\(P_{\eta,p,q}\) is a Markov transition kernel, and
\begin{equation}
\label{eq:particle-law-euler-identity}
    qP_{\eta,p,q}
    =
    q+\eta\,q\odot\Delta_{p,q}.
\end{equation}
Hence one frozen stay-or-jump transition realizes exactly the forward-Euler
step of the law-level gradient flow. These generator identities and the
Markov-kernel property are verified in
Lemma~\ref{lem:positive-part-generator} of
Appendix~\ref{app:discrete-drifting-proofs}.

We now relate this particle update to the logit update. When \(q=q_g\), write
\[
    P_{\eta,g}
    :=
    P_{\eta,p,q_g}.
\]
We are now ready to state our next result.
\begin{proposition}[Particle realization of the logit update]
\label{prop:particle-lift}
Let \(q=q_g\), and suppose that
\eqref{eq:particle-cfl-condition} holds with \(q=q_g\). Then
\(P_{\eta,g}\) is a Markov transition kernel and, as \(\eta\to0\),
\begin{equation}
\label{eq:particle-chart-consistency}
    q_{g+\eta\Delta_g}
    =
    q_gP_{\eta,g}
    +
    O(\eta^2).
\end{equation}
Thus one frozen stay-or-jump transition realizes, to first order, the same
evolution of the model law as the logit update
\(g\mapsto g+\eta\Delta_g\).
\end{proposition}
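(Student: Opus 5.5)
The plan is to chain two earlier identities: the first-order expansion of the chart update in Proposition~\ref{prop:chart-euler-consistency}, and the exact forward-Euler identity \eqref{eq:particle-law-euler-identity} for the frozen particle kernel. The bridge between them is the claim that, when \(q=q_g\), the law-level drift \(\Delta_{p,q_g}\) coincides pointwise with the logit drift \(\Delta_g\) of \eqref{eq:normalization-free-logit-drift}.

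\textbf{Markov property.} Under \eqref{eq:particle-cfl-condition} with \(q=q_g\), the off-diagonal entries of \(I+\eta A_{p,q_g}\) are \(\eta a_{p,q_g}(x,y)\ge 0\). The diagonal entries are \(1-\eta\sum_{y\neq x}a_{p,q_g}(x,y)\ge 0\). Rows sum to one because rows of \(A_{p,q_g}\) sum to zero. This is exactly the content of Lemma~\ref{lem:positive-part-generator}, which I would simply invoke.

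\textbf{Identifying the drifts.} From \eqref{eq:pi-logit-parameterization}, \(\log\rho_{q_g}=g-\log Z_\pi(g)\), so \(\delta_{q_g}(x,y)=g(y)-g(x)=\delta_g(x,y)\); the normalizer cancels in differences.

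Next, by \eqref{eq:law-level-current},
\[
-\frac{1}{q(x)}\sum_y J_{p,q}(x,y)=\sum_y K(x,y)\,L\bigl(\delta_q(x,y)\bigr)\bigl[\delta_q(x,y)-s_p(x,y)\bigr].
\]
The diagonal term \(y=x\) vanishes since \(\delta_q(x,x)=s_p(x,x)=0\).

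Substituting \(\delta_{q_g}=\delta_g\) gives \(\Delta_{p,q_g}=\Delta_g\) exactly. Representation \eqref{eq:law-level-current} itself rests on the 1-homogeneity \(m(a,b)=a\,m(1,b/a)\) and \(\pi(x)\rho_q(x)=q(x)\), which the text already states.

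\textbf{Combining.} By \eqref{eq:particle-law-euler-identity},
\[
q_gP_{\eta,g}=q_g+\eta\,q_g\odot\Delta_{p,q_g}=q_g+\eta\,q_g\odot\Delta_g.
\]
By \eqref{eq:chart-law-local-consistency},
\[
q_{g+\eta\Delta_g}=q_g+\eta\,q_g\odot\Delta_g+O(\eta^2).
\]
Subtracting the two yields \eqref{eq:particle-chart-consistency}.

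There is one subtlety: the CFL condition restricts \(\eta\) to a bounded range, but since \(g\) is fixed, all rates are fixed finite numbers. The statement is therefore an asymptotic one as \(\eta\to0\), within the admissible range.

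The main obstacle is only the drift identification, and specifically the sign bookkeeping between \(J_{p,q}\) in \eqref{eq:conductance-and-current} and \eqref{eq:law-level-current}. Concretely, one must check that
\[
\log\frac{\rho_q(x)}{\rho_p(x)}-\log\frac{\rho_q(y)}{\rho_p(y)}=s_p(x,y)-\delta_q(x,y)
\]
and that \(\pi(x)m(\rho_q(x),\rho_q(y))=q(x)L(\delta_q(x,y))\). Both follow by direct expansion.

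If one wants to avoid relying on Proposition~\ref{prop:chart-euler-consistency}, the needed expansion can be obtained directly instead. Differentiate \(\eta\mapsto q_{g+\eta\Delta_g}\) at \(\eta=0\), which gives \(q_g\odot(\Delta_g-\langle q_g,\Delta_g\rangle\mathbf 1)\). Then note that \(\langle q_g,\Delta_g\rangle=-\sum_{x,y}J(x,y)=0\) by antisymmetry. The remainder is \(O(\eta^2)\) because the map is smooth in \(\eta\).
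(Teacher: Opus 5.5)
Your proposal is correct and follows the paper's proof essentially step for step. You get the Markov property from Lemma~\ref{lem:positive-part-generator}, identify $\delta_{q_g}=\delta_g$ and hence $\Delta_{p,q_g}=\Delta_g$ via \eqref{eq:law-level-current}, and then compare the exact Euler identity \eqref{eq:particle-law-euler-identity} with the expansion \eqref{eq:chart-law-local-consistency}, just as the paper does; your optional direct derivation through the centering identity is also how the paper proves Proposition~\ref{prop:chart-euler-consistency}.
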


The proof is given in
Appendix~\ref{app:discrete-drifting-proofs}.

Finally, the positive-part construction is \emph{edgewise minimal}. Any
other pair of nonnegative directional rates realizing the same signed
current differs from~\eqref{eq:minimal-jump-rate} by equal counterflow in
the two directions. This counterflow cancels from the marginal evolution
while increasing the total particle traffic. Consequently,
\eqref{eq:minimal-jump-rate} is the unique realization minimizing
directional traffic on each edge. Moreover, when \(q=p\), the current
vanishes and all minimal jump rates are zero, so particles stop pathwise.
These statements are formalized in
Proposition~\ref{prop:edgewise-minimality} and
Corollary~\ref{cor:pathwise-stationarity-at-target} of
Appendix~\ref{app:discrete-drifting-proofs}.

%% file: sections/experiments.tex
\section{Controlled Validation}
\label{sec:experiments}

\begin{figure}[t] \centering \includegraphics[width=\linewidth]{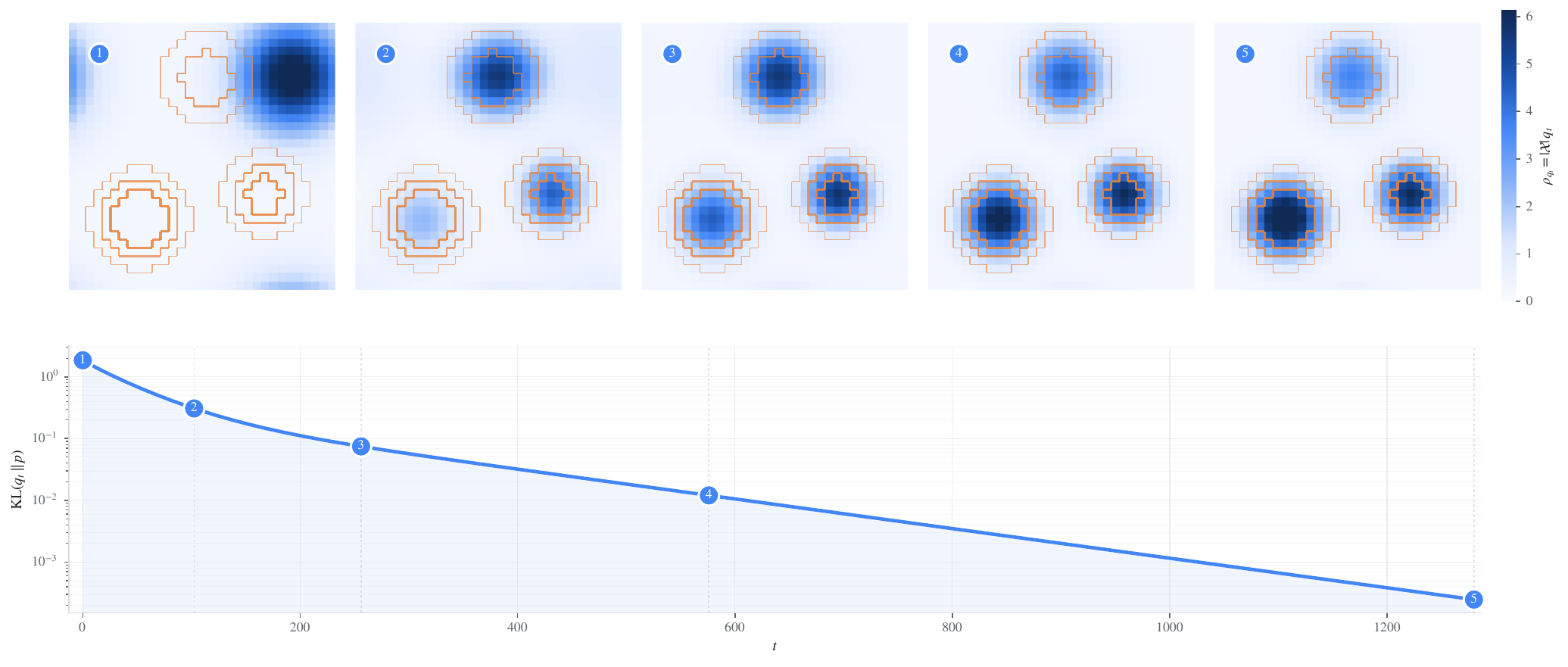} \caption{\textbf{Exact tabular flow on the \(32\times32\) torus}. First 1600 updates. \emph{Top:} Relative density \(\rho_{q_t}=|\X|q_t\) (blue) at five stages; orange contours show the fixed target density. The model begins as a single bump separated from the target, transports mass across low-density regions, and converges to the three modes in the correct proportions. \emph{Bottom:} \(\mathrm{KL}(q_t\,\|\,p)\) versus flow time on a logarithmic scale. Numbered markers associate each displayed distribution with its position along the trajectory.} \label{fig:synthetic-flow} 
\end{figure}

We design a two-stage synthetic experiment to separate the correctness of the
discrete transport from the error introduced by neural amortization. Both
stages use the same target distribution, transport geometry, initialization,
and time discretization. In the first stage, we represent the model distribution
exactly with one logit per state, eliminating estimation, representation, and
optimization error and allowing us to study the proposed transport update in
isolation. In the second stage, we replace this tabular representation with a
shared neural generator and examine how accurately the resulting transport
steps can be amortized. This controlled progression separates discrepancies in
the underlying discrete flow from those introduced by neural approximation. 

The state space is the periodic grid
\( \X=(\mathbb Z/32\mathbb Z)^2\), \(|\X|=1024\), 
equipped with the nearest-neighbour random-walk kernel \(K\) and its uniform
invariant measure \(\pi\). The target distribution \(p\) is a mixture of three
Gaussian bumps with unequal weights and a small uniform floor, ensuring the
strict positivity assumed in Section~\ref{sec:discrete-drifting}. We initialize
the model as a single bump separated from all three target modes, so reaching
\(p\) requires transporting probability across low-density regions rather than
merely performing local relaxation. Unless stated otherwise, both stages use
the same step size \(\eta=0.8\). Full target parameters and remaining
hyperparameters are given in Appendix~\ref{app:experimental-details}.

\paragraph{Exact Tabular Validation.} We first ask whether the proposed update reproduces the predicted discrete
Wasserstein gradient flow when every distributional quantity is available
exactly. At scale, this update is obscured by three additional sources of
error: the relevant density ratios must be estimated from samples, the
generator has limited representational capacity, and optimization is
stochastic. We remove all three by representing the model with one logit per
state, \( q_g=\operatorname{softmax}(g). \)
Because \(\pi\) is uniform, these logits coincide with the chart in
\eqref{eq:pi-logit-parameterization} and can represent any strictly positive
distribution on \(\X\). Since both \(p\) and \(q_g\) are known exactly, all
target scores, currents, mobilities, and jump rates can likewise be evaluated
without estimation. Moreover, the update depends only on model-logit
differences and target log-density ratios along edges, so no normalizing
constant is required. Any discrepancy from the predictions can
therefore be attributed to the transport update or its numerical
discretization rather than to statistical estimation or neural amortization.

We apply the exact tabular update for \(6400\) steps, evaluating every quantity entering the update exactly. Each step is implemented through the stop-gradient objective (\eqref{eq:full-batch-stop-gradient-loss}); with unit learning rate, one gradient step coincides with the explicit Euler update of \eqref{eq:logit-euler-update}. Figure~\ref{fig:synthetic-flow} visualizes the first \(1600\) steps of this trajectory. During this initial phase, the model transports mass across the torus, splits toward the three target regions, and recovers their unequal mixture structure, while \(\mathrm{KL}(q_t\,\|\,p)\) decreases sharply. Continuing the same exact trajectory to \(6400\) steps drives the KL to a numerical floor near \(1.3\times10^{-10}\), while total variation decreases from \(0.71\) to \(7.7\times10^{-6}\). The complete trajectory and additional numerical controls are reported in Appendix~\ref{app:additional-results}. Figure~\ref{fig:synthetic-current} shows the edgewise transport driving this evolution at initialization. The target-dependent component directs mass toward states favoured by \(p\), while the model-dependent component spreads mass away from regions where \(q\) is concentrated. Their sum gives the net current of the target-relative Maas flow, carrying probability from the
initial model surplus toward the target modes.

\begin{figure}[t] \centering \includegraphics[width=\linewidth]{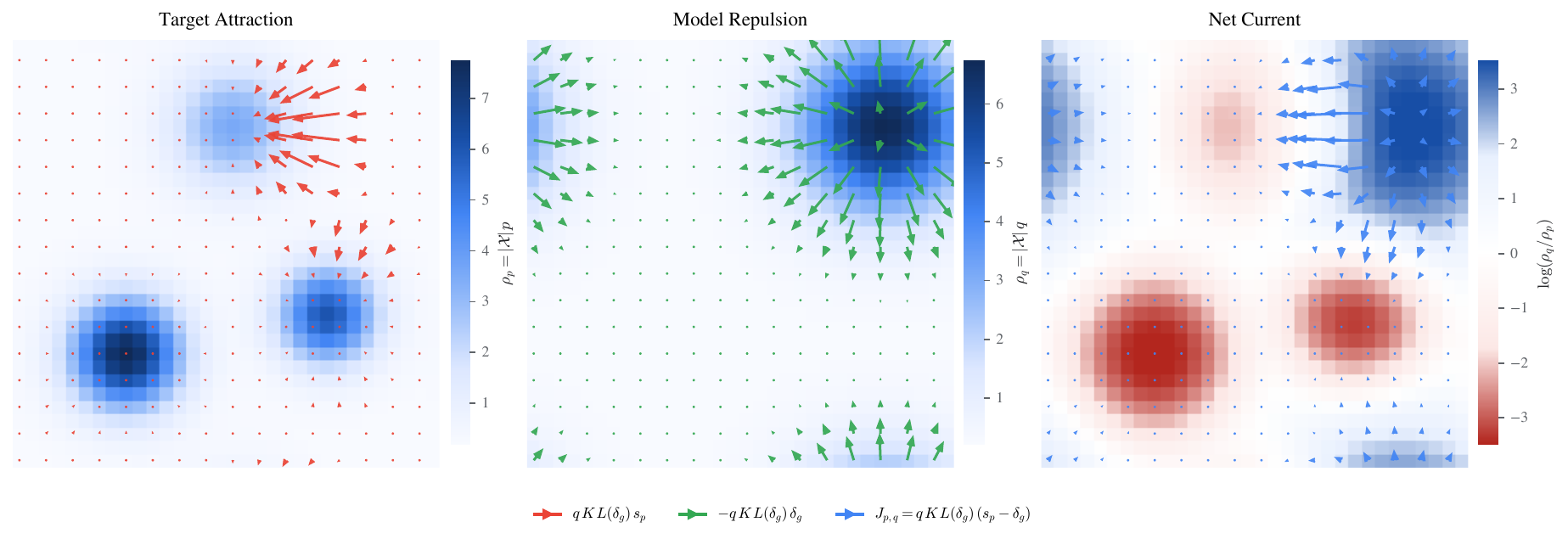} \caption{\textbf{Edgewise decomposition of the relative-entropy current at initialization}. \emph{Left:} Target-attraction current over the target density. \emph{Center:} Model-repulsion current over the model density; by the logarithmic-mean identity, this component coincides with the linear diffusion current. \emph{Right:} Net current over \(\log(\rho_q/\rho_p)\), transporting mass from model surplus toward target deficit.} \label{fig:synthetic-current} 
\end{figure}

\paragraph{Particle Approximation Validation.}
\label{sec:exp-part}
The aforementioned tabular update of \eqref{eq:logit-euler-update} acts on all $|\X|$ states at once, which is possible only because the space is small enough
to enumerate, whereas the particle realization of
Section~\ref{sec:minimal-particle-lift} instead moves individual samples; we therefore ask whether one frozen stay-or-jump transition of \eqref{eq:particle-transition-kernel} reproduces it, as Proposition~\ref{prop:particle-lift} claims. Freezing the minimal rates of \eqref{eq:minimal-jump-rate} at states of the tabular trajectory, we sweep the step size across five decades, from well below the bound of \eqref{eq:particle-cfl-condition} to past it, and evaluate both residuals in
double precision (Figure~\ref{fig:particle-batch}~(left)). The law-level identity of \eqref{eq:particle-law-euler-identity} holds to machine precision throughout, so one transition is not an approximation of the flow of \eqref{eq:target-kl-gradient-flow} but its exact explicit-Euler step. 

The residual of \eqref{eq:particle-chart-consistency} instead decays with a fitted slope of two, so the mismatch between moving particles and moving logits is the
curvature of the chart of \eqref{eq:pi-logit-parameterization} that Proposition~\ref{prop:chart-euler-consistency} already ascribes to the logit update, and nothing beyond it. Second order rather than first is what makes the substitution safe, since the per-step error then accumulates to $O(\eta)$ over a
fixed horizon and vanishes with the step size, and the bound of \eqref{eq:particle-cfl-condition} stays far from binding at the $\eta$ used throughout.

\paragraph{Neural amortization.}
\label{sec:exp-mlp}

We next replace the tabular representation with a finite-capacity neural
generator and ask whether a single parameterized model can absorb the sequence
of transport updates. In the tabular experiment, the logits were themselves
the model parameters, so the Euler update
\eqref{eq:logit-euler-update} could be applied directly. A neural generator,
by contrast, represents its marginal through a mixture of latent-conditional
distributions and cannot be updated by directly perturbing its marginal
logits. We therefore construct training targets using the equivalent frozen
law-level transport of \eqref{eq:particle-transition-kernel}. This reintroduces
representation and optimization error while keeping the model marginal, target
scores, and transport kernel exactly computable, so no estimation error is
present.

The generator is a latent-conditioned MLP
\(G_\theta(\cdot\mid z)\), evaluated on a fixed bank of \(B=256\) latent
codes. The grid, target, initial marginal, transport kernel, and step size are
identical to the tabular experiment. At transport step \(k\), we denote the
conditional distributions by
\[
    R_{k,i}
    :=
    G_{\theta_k}(\cdot\mid z_i),
    \qquad i=1,\ldots,B,
\]
and form the generator marginal
\[
    q_k
    :=
    \frac{1}{B}\sum_{i=1}^B R_{k,i}.
\]
The current marginal determines a single frozen transport kernel, which is
then applied to every conditional:
\begin{equation}
    T_{k,i}
    :=
    R_{k,i}P_{\eta,p,q_k},
    \qquad
    P_{\eta,p,q_k}
    :=
    I+\eta A_{p,q_k}.
    \label{eq:mlp-teacher}
\end{equation}
Holding the teachers \(T_{k,i}\) fixed, we take a small number of stochastic
gradient steps on their conditional cross-entropy with
\(G_\theta(\cdot\mid z_i)\), corresponding to the stop-gradient objective in
\eqref{eq:full-batch-stop-gradient-loss}, and then recompute the marginal and
the next transport step. We repeat this procedure for \(1600\) transport
updates.

This construction preserves the desired marginal dynamics. Indeed, linearity
of the frozen kernel gives
\begin{equation}
    \frac{1}{B}\sum_{i=1}^B T_{k,i}
    =
    \left(\frac{1}{B}\sum_{i=1}^B R_{k,i}\right)
    P_{\eta,p,q_k}
    =
    q_kP_{\eta,p,q_k}.
\end{equation}
Thus, if the conditional teachers were fitted exactly, the neural marginal
would advance by exactly the frozen law-level Euler step
\eqref{eq:particle-law-euler-identity}, which agrees with the corresponding
logit update to first order in \(\eta\). In practice, the same network must
successively approximate every teacher along the trajectory, so fitting errors
may accumulate. We therefore measure its deviation from the exact transported
trajectory throughout the rollout rather than only at the final iterate.

\begin{figure}[t]
\centering
\begin{minipage}[c]{0.49\linewidth}
  \centering
  \includegraphics[width=\linewidth]{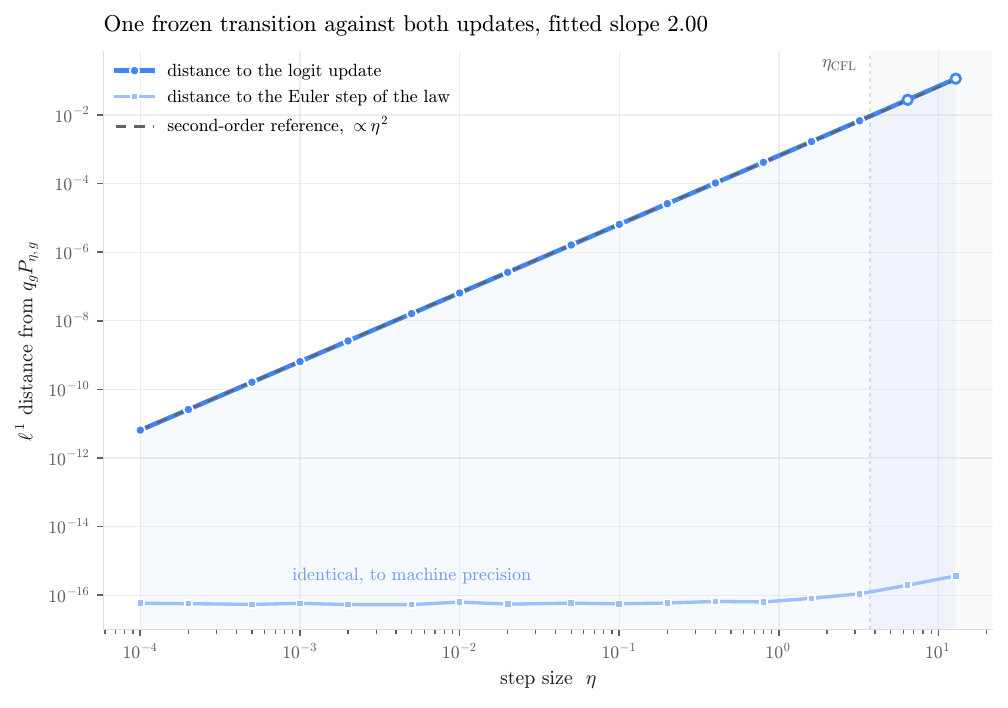}
\end{minipage}\hfill
\begin{minipage}[c]{0.49\linewidth}
  \centering
  \includegraphics[width=\linewidth]{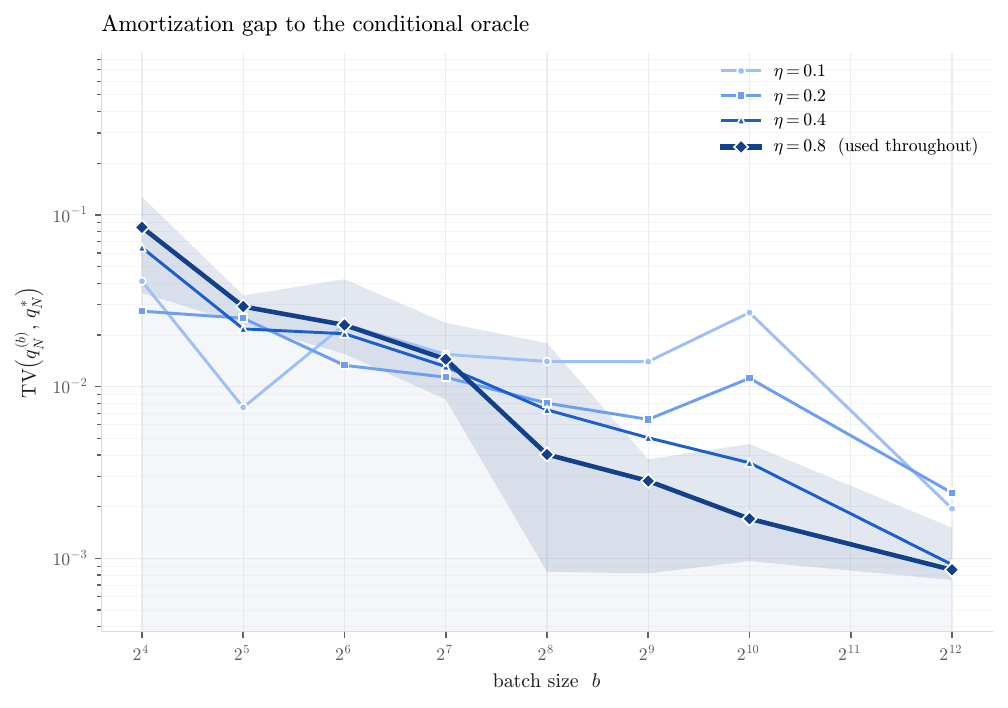}
\end{minipage}
\caption{\emph{Left:} one frozen transition of
\eqref{eq:particle-transition-kernel} against the two updates it replaces: fitted slope $2.00$ for the residual of \eqref{eq:particle-chart-consistency},
and machine precision for the Euler step of the law, which \eqref{eq:particle-law-euler-identity} makes an identity. The vertical rule marks $\eta_{\mathrm{CFL}}$, the largest step size admitted by \eqref{eq:particle-cfl-condition}, past which no particle can make the move. \emph{Right:} final gap to the
conditional oracle against batch size $b$, at four step sizes with $\eta N$
fixed; medians over five seeds, band spanning seeds at $\eta=0.8$. }
\label{fig:particle-batch}
\end{figure}

As a reference, we use a conditional oracle that starts from the same
conditionals and applies \eqref{eq:mlp-teacher} exactly at every transport
step, without neural fitting. Figure~\ref{fig:mlp-final} shows that the neural
generator closely follows this oracle: the initial mass is transported across
the low-density regions and distributed among the three target modes with the
correct unequal weights, while the final marginal is visually
indistinguishable from the target at the resolution shown. The fitted
conditionals attain comparable error on training and held-out latent codes,
indicating that the network learns the transport rule rather than merely
memorizing the training bank. The right panel reports the total-variation gap
between the neural rollout and the oracle together with the oracle's distance
to \(p\). Increasing the per-step latent batch size \(b\) reduces the
neural--oracle gap by more than an order of magnitude, with the smallest error
obtained at \(b=B\). This behavior indicates that the remaining discrepancy is
primarily associated with fitting each transport teacher rather than with the
underlying transport itself. Complete KL and total-variation results, together
with the architecture, optimizer, and held-out evaluation protocol, are given
in Appendices~\ref{app:experimental-details}
and~\ref{app:additional-results}.

%% file: sections/conclusion.tex
\section{Conclusion}
\label{sec:conclusion}

We introduced Discrete Drifting, which replaces Euclidean drift with a relative-KL gradient flow in Maas geometry and realizes each frozen flow step through Markov jumps. In early results, on an enumerable finite-state problem, the proposed transport follows the predicted KL-descending dynamics and can be amortized by a finite neural generator while preserving one-step inference. Future work will focus on scalable local ratio estimation and evaluation on real discrete data against strong discrete diffusion and one-step baselines.




%% file: sections/appendix.tex
\input{sections/appendix/appendix_algo}

\input{sections/appendix/appendix_experiments}
\clearpage
\newpage
\input{sections/appendix/appendix_results}

\clearpage
\newpage
\input{sections/appendix/appendix_proof}


%% file: sections/appendix/appendix_algo.tex
\section{Neural Amortization of the Drifting Update}
Section~\ref{sec:exp-mlp} replaces the tabular logits with a shared generator
and fits each frozen transition rather than applying it. Two facts make that
substitution well posed, and we establish them here. First, the conditional
teachers average to the exact law-level Euler step, so the generator is asked
to follow the trajectory validated in Section~\ref{sec:experiments} rather than
a nearby one. Second, the distance between the fitted marginal and that step is
controlled by the mean conditional fitting error, which separates the exact
teacher construction from the amortization gap we measure.

\subsection{Conditional teachers}

At outer step \(k\), let
\begin{equation}
\begin{aligned}
    R_{k,z}&:=G_{\theta_k}(\cdot\mid z),\\
    q_k&:=\mathbb E_{z\sim\nu}[R_{k,z}],\\
    T_{k,z}&:=R_{k,z}P_{\eta,p,q_k}.
\end{aligned}
    \label{eq:app-conditional-teacher}
\end{equation}
Every conditional is advanced by the same kernel, built once from th
marginal and then frozen. Linearity gives
\begin{equation}
    \mathbb E_z[T_{k,z}]
    =\mathbb E_z[R_{k,z}]P_{\eta,p,q_k}
    =q_kP_{\eta,p,q_k},
    \label{eq:app-teacher-marginal}
\end{equation}
which by \eqref{eq:particle-law-euler-identity} is exactly the law-l
update. No approximation enters before fitting.

\subsection{Fitting and the amortization gap}

Holding the teachers fixed, we fit the generator by minimizing the mean
conditional cross-entropy
\begin{equation}
    \mathcal L_k(\theta)
    :=-\mathbb E_{z\sim\nu}\sum_{x\in\X}
      T_{k,z}(x)\log G_\theta(x\mid z),
    \label{eq:app-amortized-loss}
\end{equation}
the per-latent form of the stop-gradient objective
\eqref{eq:full-batch-stop-gradient-loss}. Because the teachers are c
\begin{equation}
    \mathcal L_k(\theta)
    =\mathbb E_z\Bigl[
      H(T_{k,z})
      +\mathrm{KL}\bigl(T_{k,z}\,\|\,G_\theta(\cdot\mid z)\bigr)
      \Bigr],
    \label{eq:app-cross-entropy-decomposition}
\end{equation}
so the only \(\theta\)-dependent term is the mean divergence to the teachers.
In the exact-fitting limit \(G_{\theta_{k+1}}(\cdot\mid z)=T_{k,z}\)
learned marginal absorbs one exact Euler step.

Finite fitting leaves a residual, whose effect on the marginal is bounded by
convexity of total variation. Writing
\(\bar q_{k+1}:=\mathbb E_zG_{\theta_{k+1}}(\cdot\mid z)\),
\begin{equation}
    \operatorname{TV}\!\left(\bar q_{k+1},q_kP_{\eta,p,q_k}\right)
    \leq\mathbb E_z\operatorname{TV}\!\left(
      G_{\theta_{k+1}}(\cdot\mid z),T_{k,z}\right).
    \label{eq:app-marginal-amortization-bound}
\end{equation}
The right-hand side is the local fitting error
\eqref{eq:app-training-teacher-error} and the left-hand side the induced
marginal error \eqref{eq:app-marginal-teacher-error}, both reported in
Appendix~\ref{app:additional-neural-results}. The teacher construction
therefore contributes nothing to the amortization gap of
Figure~\ref{fig:particle-batch}~(right), and every departure from the validated
trajectory is attributable to the fitting budget.

\subsection{Training loop and cost}

For \(|E|\) directed edges and \(B\) latents, one outer step costs \(O(|E|)\)
to build the marginal rates and \(O(B|E|)\) to transport all conditionals, so
teacher construction is linear in the bank size and independent of the inner
budget. The Markov kernel is used only during training: sampling from the
fitted generator is a single forward pass.

\begin{algorithm}[!ht]
\caption{Amortized discrete drifting. One outer step advances every conditional
by the same frozen kernel; the inner loop fits the generator to the result.}
\label{alg:amortized-drifting}

\begin{algorithmic}[1]
\Require Target scores \(s_p\), latent law \(\nu\), step size \(\eta\),
         outer steps \(N\), inner steps \(m\)

\State Initialize \(\theta_0\)

\For{\(k = 0, \ldots, N-1\)}
    \State \(R_{k,z} \gets G_{\theta_k}(\cdot \mid z)\)
        \Comment{Conditionals}

    \State \(q_k \gets \mathbb{E}_{z \sim \nu}[R_{k,z}]\)
        \Comment{Model marginal}

    \State \(a_{p,q_k} \gets\) rates of
        \eqref{eq:minimal-jump-rate} from \(\delta_{q_k}\) and \(s_p\)

    \State \(P_{\eta,p,q_k} \gets I + \eta A_{p,q_k}\)
        \Comment{Markov under \eqref{eq:particle-cfl-condition}}

    \State \(T_{k,z} \gets
        \operatorname{sg}\bigl[R_{k,z}P_{\eta,p,q_k}\bigr]\)
        \Comment{Frozen teachers}

    \State \(\theta_{k+1} \gets m\) optimizer steps on
        \(\mathcal{L}_k\) of \eqref{eq:app-amortized-loss},
        starting from \(\theta_k\)
\EndFor

\Ensure \(\theta_N\)
\end{algorithmic}
\end{algorithm}

%% file: sections/appendix/appendix_experiments.tex
\clearpage
\newpage
\section{Experimental Setup and Reproducibility Details}
\label{app:experimental-details}

This section provides the complete setup for the controlled experiments in
Section~\ref{sec:experiments}. Both experiments use the same enumerable state
space, target distribution, initialization, transport graph, and time
discretization. The tabular experiment evaluates the proposed transport without
representation or optimization error. The neural experiment then introduces
finite representation and fitting while retaining exact, enumerable transport
teachers.

\subsection{Shared enumerable problem}
\label{app:shared-problem}

\paragraph{State space and transport graph.}
The state space is the periodic grid
\(\mathcal X=\mathbb Z_{32}\times\mathbb Z_{32}\), represented by
\(x=(x_1,x_2)\in\{0,\ldots,31\}^2\), with
\(|\mathcal X|=1024\). Each state is connected to its four nearest neighbours,

$$
    \mathcal N(x)
    =\{x\pm(1,0),x\pm(0,1)\}\pmod{32}.
$$

We use the nearest-neighbour random-walk kernel
\begin{equation}
K(x,y)=
\begin{cases}
1/4, & y\in\mathcal N(x),\\
0,   & \text{otherwise},
\end{cases}
\qquad
\pi(x)=\frac{1}{1024}.
\label{eq:app-torus-kernel}
\end{equation}
This kernel is reversible with respect to the uniform reference law \(\pi\).

\paragraph{Periodic Gaussian family.}
For \(a,b\in\{0,\ldots,31\}\), define the periodic distance
\begin{align}
d_{32}(a,b)
&:=\min\{|a-b|,\,32-|a-b|\},\notag\\
d_{\mathbb T}^2(x,\mu)
&:=d_{32}(x_1,\mu_1)^2+d_{32}(x_2,\mu_2)^2.
\end{align}
The normalized periodic Gaussian with centre \(\mu\) and scale \(\sigma\) is
\begin{align}
Z_{\mu,\sigma}
&:=\sum_{y\in\mathcal X}
\exp\left[-\frac{d_{\mathbb T}^2(y,\mu)}{2\sigma^2}\right],\notag\\
\varphi_{\mu,\sigma}(x)
&:=Z_{\mu,\sigma}^{-1}
\exp\left[-\frac{d_{\mathbb T}^2(x,\mu)}{2\sigma^2}\right].
\label{eq:app-periodic-gaussian}
\end{align}

\paragraph{Target and initialization.}
The target is a mixture of three periodic Gaussians with a uniform floor:
\begin{equation}
\begin{aligned}
p(x)
&=(1-\varepsilon)\widetilde p(x)+\varepsilon\pi(x),\\
\widetilde p(x)
&:=\sum_{j=1}^{3}w_j\varphi_{\mu_j,\sigma_j}(x),
\qquad \sum_{j=1}^{3}w_j=1,\\
q_0(x)
&=\varphi_{\mu_0,\sigma_0}(x).
\end{aligned}
\label{eq:app-synthetic-laws}
\end{equation}
The target modes have unequal weights and are separated by low-density regions.
The initial mode is placed away from all three target modes, so reaching the
target requires both splitting and transporting probability mass. The uniform
floor ensures the strict positivity required by the flow. All centres and
scales are given in lattice units, and take the values
\begin{equation}
\begin{aligned}
(\mu_1,\sigma_1,w_1)&=\bigl((8.00,8.00),\,2.88,\,0.462\bigr),\\
(\mu_2,\sigma_2,w_2)&=\bigl((23.04,10.88),\,2.56,\,0.284\bigr),\\
(\mu_3,\sigma_3,w_3)&=\bigl((16.00,24.96),\,3.20,\,0.254\bigr),\\
(\mu_0,\sigma_0)&=\bigl((26.24,24.96),\,4.48\bigr),
\qquad \varepsilon=0.168 .
\end{aligned}
\label{eq:app-synthetic-parameters}
\end{equation}
The floor is set so that the least likely state still carries relative density
\(\rho_p=0.168\), against \(7.73\) at the highest mode. The implementation
builds the same law from unnormalized periodic Gaussians and a log-domain
floor; \eqref{eq:app-synthetic-parameters} reports the equivalent parameters of
the normalized form in \eqref{eq:app-synthetic-laws}.

Because \(\mathcal X\) is finite, every distribution, directed edge, current,
transition probability, and conditional teacher can be evaluated by
enumeration. Using the same problem throughout allows us to distinguish errors
in the proposed flow from those introduced by particle sampling or neural
amortization.

\subsection{Exact tabular validation}
\label{app:tabular-experiment}

The tabular model assigns one unconstrained logit \(g(x)\) to each state and
represents

$$
    q_g=\operatorname{softmax}(g).
$$

The initial logits represent \(q_0\) from
\eqref{eq:app-synthetic-laws}. At each outer step, we enumerate all states and
directed nearest-neighbour edges to compute
\(\delta_g(x,y)\), \(s_p(x,y)\), \(L(\delta_g(x,y))\), the current, the logit
drift, and the frozen Markov transition.

Numerically, the logarithmic-mean factor \(L(r)\) is evaluated as
\(\operatorname{expm1}(r)/r\) away from zero and by its continuous limit at
zero. Writing the frozen transition of \eqref{eq:particle-transition-kernel}
entrywise,
\begin{equation}
P_{\eta,p,q}(x,y)=
\begin{cases}
\eta\,a_{p,q}(x,y), & y\in\mathcal N(x),\\[0.2em]
1-\eta\sum_{z\in\mathcal N(x)}a_{p,q}(x,z), & y=x,\\[0.2em]
0, & \text{otherwise},
\end{cases}
\label{eq:app-particle-kernel-entries}
\end{equation}
we verify that every entry is nonnegative before applying each update. This is
the step-size condition of \eqref{eq:particle-cfl-condition}: the smallest stay
probability over the tabular trajectory is \(0.950\), attained at
\(\eta=0.8\).

\begin{table}[t]
\centering
\small
\begin{tabular}{@{}p{0.31\linewidth}p{0.61\linewidth}@{}}
\toprule
Component & Setting \\
\midrule
Model & One unconstrained logit per state;
\(q_g=\operatorname{softmax}(g)\) \\

Initialization & Exact logits representing \(q_0\) \\

Drift evaluation & All \(1024\) states and directed nearest-neighbour edges \\

Outer updates & \(6400\) \\

Flow step & \(\eta=0.8\) \\

Final flow time & \(5120\) \\

Chart optimization & One full-batch step on \eqref{eq:full-batch-stop-gradient-loss}
with learning rate \(1\) \\

Exact references & Probability-space Euler update and frozen Markov law \\

Particle counts & \(M\in[2\times10^3,2\times10^5]\);
six repetitions per count \\
\bottomrule
\end{tabular}
\caption{Configuration of the exact tabular validation.}
\label{tab:app-tabular-configuration}
\end{table}

To quantitatively validate every step of the construction, from the law-level
flow of \eqref{eq:target-kl-gradient-flow} to the sampled stop-gradient update,
we perform six numerical checks:

\begin{itemize}
\item \textbf{Chart versus density Euler.}
We compare the distribution represented by
\(g+\eta\Delta_g\) with the explicit density update
\(q_g+\eta q_g\odot\Delta_g\).

\item \textbf{Frozen Markov law versus density Euler.}
We compare \(q_gP_{\eta,p,q_g}\) with
\(q_g+\eta q_g\odot\Delta_g\).

\item \textbf{Particle histogram versus frozen Markov law.}
We compare the empirical post-jump histogram
\(\widehat q_M^+\) with \(q_gP_{\eta,p,q_g}\).

\item \textbf{Frozen transition versus logit update.}
We compare \(q_gP_{\eta,p,q_g}\) with \(q_{g+\eta\Delta_g}\), which is the
residual of \eqref{eq:particle-chart-consistency}.

\item \textbf{Stop-gradient control.}
We compare the intended frozen-target regression in
\eqref{eq:full-batch-stop-gradient-loss} with the
differentiate-through-target objective in
\eqref{eq:no-stop-gradient-objective}.

\item \textbf{Sampled state--edge control.}
We compare an unbiased sampled implementation with full enumeration.
States are drawn uniformly with replacement, \(4096\) per update, with two
neighbours sampled per drawn state. The scaled loss is minimized by gradient
descent at learning rate \(0.5\) for \(12800\) updates, so the sampled control
covers the same flow time \(5120\) as the enumerated run.
\end{itemize}

Results for these checks are reported in
Appendix~\ref{app:additional-tabular-results}. Additionally the
frozen-transition order test, also appears in
Figure~\ref{fig:particle-batch}~(left) of the main text and in Table~\ref{tab:app-lift-order} below.

\paragraph{Step-size and particle sweeps.}
The chart-versus-Euler and frozen-transition checks are order tests, so their sweeps are specified here. The
production flow runs in single precision, whose probability floor near
\(10^{-7}\) would hide a second-order signal, and both order tests are
therefore recomputed on the host in double precision from the same frozen
logits. The chart-versus-Euler control freezes the trajectory at outer step
\(1280\) and sweeps
\(\eta\in\{1,2,5\}\times\{10^{-4},10^{-3},10^{-2},10^{-1}\}\cup\{4\times10^{-1}\}\).
The frozen-transition control freezes the trajectory at outer steps
\(0\), \(128\), \(320\), and \(720\), and extends the same grid to
\(\eta\in\{0.8,1.6,3.2,6.4,12.8\}\), which carries it past the step-size
bound at the earliest state. The slope is fitted by least squares in
\(\log\eta\) over the points that satisfy \eqref{eq:particle-cfl-condition}
and whose residual exceeds \(10^{-14}\), so that neither an inadmissible step
nor double roundoff enters the fit. Table~\ref{tab:app-lift-order} reports the
four frozen states; Figure~\ref{fig:particle-batch}~(left) plots the first.
The particle control freezes the trajectory at outer step \(320\), uses
\(\eta=0.8\), and draws
\(M\in\{2,5\}\times\{10^3,10^4\}\cup\{10^5,2\times10^5\}\) particles,
with six independent repetitions at each count.

\begin{table}[t]
\centering
\small
\begin{tabular}{@{}rrrrr@{}}
\toprule
Flow time \(t\) & \(\mathrm{KL}(q_t\,\|\,p)\) & Step-size bound &
Fitted slope & Law-identity residual \\
\midrule
\(0\)     & \(1.85\)             & \(3.72\)  & \(2.0014\) & \(3.7\times10^{-16}\) \\
\(102.4\) & \(3.08\times10^{-1}\) & \(7.19\)  & \(2.0001\) & \(1.2\times10^{-16}\) \\
\(256.0\) & \(7.47\times10^{-2}\) & \(16.06\) & \(2.0000\) & \(9.3\times10^{-17}\) \\
\(576.0\) & \(1.20\times10^{-2}\) & \(38.20\) & \(2.0001\) & \(6.9\times10^{-17}\) \\
\bottomrule
\end{tabular}
\caption{Frozen-transition order test at four states of the tabular
trajectory. The step-size bound is the largest \(\eta\) admitted by
\eqref{eq:particle-cfl-condition}, the fitted slope is that of
\(\|q_{g+\eta\Delta_g}-q_gP_{\eta,g}\|_1\), and the last column is the
largest observed violation of \eqref{eq:particle-law-euler-identity} over the
whole sweep. The bound grows as the flow converges and the rates decay.}
\label{tab:app-lift-order}
\end{table}

\subsection{Finite neural amortization}
\label{app:neural-experiment}

\paragraph{Generator and initialization.}
The neural generator is a latent-conditioned MLP
\(G_\theta(\cdot\mid z)\), where \(z\sim\mathcal N(0,I_{16})\). It has three
hidden layers of width \(128\), SiLU activations, and a \(1024\)-way softmax
output. The headline experiment draws a fixed training bank of \(B=256\)
latents and uses the uniform distribution over this bank as the finite latent
prior. A separate fixed bank of \(256\) held-out latents is never used for
optimization.

A statewise output bias is calibrated once so that the training-bank marginal
matches the tabular initialization \(q_0\), and is then frozen. The bias
therefore supplies only the common initial marginal. All subsequent transport
must be represented by the shared MLP weights. The bias is obtained by a damped
fixed-point iteration: starting from \(\log q_0\) centred to zero mean, each
of \(700\) iterations adds \(0.75\bigl(\log q_0-\log\bar q\bigr)\) to the
current bias, where \(\bar q\) is the bank marginal it induces, and recentres
the result. The iteration count is fixed rather than tolerance-driven, so
calibration is deterministic given the seed.

\paragraph{Exact conditional teachers.}
For the fixed training bank, let
\begin{equation}
\begin{aligned}
R_{k,i}
&=G_{\theta_k}(\cdot\mid z_i),\\
q_k
&=\frac1B\sum_{i=1}^{B}R_{k,i},\\
T_{k,i}
&=R_{k,i}P_{\eta,p,q_k}.
\end{aligned}
\label{eq:app-finite-bank-teacher}
\end{equation}
We enumerate all \(1024\) states and all \(B\) conditionals. Consequently,
\(q_k\), \(P_{\eta,p,q_k}\), and every \(T_{k,i}\) are exact for this finite
latent prior. Moreover,
\begin{equation}
\frac1B\sum_{i=1}^{B}T_{k,i}
=q_kP_{\eta,p,q_k},
\label{eq:app-mean-teacher}
\end{equation}
so the mean teacher is exactly the density-Euler update validated in the
tabular experiment.

At each outer step, all teachers are frozen and fitted jointly by minimizing
their mean conditional cross-entropy. We take \(16\) Adam steps and carry the
optimizer moments across outer updates. Carrying the moments is useful near
convergence, where the required teacher movements become small.

\begin{table}[t]
\centering
\small
\begin{tabular}{@{}p{0.31\linewidth}p{0.61\linewidth}@{}}
\toprule
Component & Setting \\
\midrule
Latent distribution & \(z\sim\mathcal N(0,I_{16})\) \\

Generator & Three hidden layers of width \(128\), with SiLU activations \\

Output & \(1024\)-way softmax \\

Training latent bank & \(B=256\), fixed for the complete run \\

Held-out latent bank & \(256\), fixed and never optimized against \\

Initialization & Statewise output bias calibrated to \(q_0\), then frozen \\

Outer updates & \(1600\) \\

Flow step & \(\eta=0.8\) \\

Final flow time & \(1280\) \\

Inner optimization & \(16\) full-bank Adam steps per outer update \\

Adam learning rate & \(2\times10^{-3}\) \\

Optimizer state & First and second moments carried across outer updates \\

Independent repetitions & Independent deterministic seeds \\
\bottomrule
\end{tabular}
\caption{Configuration of the finite neural amortization experiment.}
\label{tab:app-mlp-configuration}
\end{table}

\paragraph{Conditional oracle.}
We compare the neural rollout with an oracle that applies the same frozen
transition directly to each conditional:
\begin{equation}
R^*_{k+1,i}
=R^*_{k,i}P_{\eta,p,q_k^*},
\qquad
q_k^*=\frac1B\sum_{i=1}^{B}R^*_{k,i}.
\label{eq:app-conditional-oracle}
\end{equation}
This oracle contains no neural fitting error. As an implementation check, its
marginal is also integrated independently using the probability-space Euler
update. Across the rollout, the two implementations differ by at most
\(6.6\times10^{-9}\) in any state's probability.

\subsection{Finite-batch sweep}
\label{app:batch-sweep}

The headline neural experiment enumerates its complete \(256\)-latent training
bank. We separately examine how finite-batch fitting affects the accumulated
amortization error. For this experiment, we fix a population of
\(B_{\mathrm{pop}}=4096\) latents and vary the finite-batch size over

$$
    b\in\{16,32,64,128,256,512,1024,4096\}.
$$

The \(b=4096\) condition uses the complete latent population. Unless stated
otherwise, the architecture, flow step, optimizer, number of inner updates, and
number of outer updates match Table~\ref{tab:app-mlp-configuration}. Each
condition is repeated over independent deterministic seeds.

At each outer update we draw a fresh subset of \(b\) latents without
replacement from the fixed population, and the same subset is reused for all
\(16\) inner steps of that update. The subset alone defines the marginal
\(q_k\), the frozen kernel \(P_{\eta,p,q_k}\), and every conditional
teacher, so a small \(b\) perturbs the update itself and not merely the
gradient. Evaluation is unaffected by this choice: the reported neural marginal
\(q_k^{(b)}\) and the conditional-oracle marginal \(q_k^*\) are always
averaged over the complete \(4096\)-latent population. Each subset is the
length-\(b\) prefix of a permutation drawn from a seed-derived key, so
conditions sharing a seed also share the latent population, the initial
parameters, the optimizer state, and nested batches.

Let \(q_k^{(b)}\) denote the neural marginal produced with finite-batch size
\(b\), and let \(q_k^*\) denote the conditional-oracle marginal defined on the
same latent population. For \(N_{\mathrm{out}}=1600\), the quantities reported
in Figure~\ref{fig:particle-batch}~(right) are
\begin{align}
g_{\mathrm{final}}^{(b)}
&:=
\operatorname{TV}\left(
q_{N_{\mathrm{out}}}^{(b)},
q_{N_{\mathrm{out}}}^*
\right),
\label{eq:app-final-oracle-gap}\\
g_{\mathrm{avg}}^{(b)}
&:=
\frac{1}{N_{\mathrm{out}}+1}
\sum_{k=0}^{N_{\mathrm{out}}}
\operatorname{TV}\left(q_k^{(b)},q_k^*\right).
\label{eq:app-average-oracle-gap}
\end{align}
The first measures endpoint agreement, while the second measures agreement over
the complete rollout. Both marginals are recorded at every outer step, and the
time average runs over all \(N_{\mathrm{out}}+1\) of them, including the
common initialization at \(k=0\).

\subsection{Metrics and statistical reporting}
\label{app:experimental-metrics}

For probability laws \(r\) and \(s\) on \(\mathcal X\), we report
\begin{equation}
\begin{aligned}
\operatorname{TV}(r,s)
&:=\frac12\sum_{x\in\mathcal X}|r(x)-s(x)|,\\
\mathrm{KL}(r\,\|\,s)
&:=\sum_{x\in\mathcal X}
r(x)\log\frac{r(x)}{s(x)}.
\end{aligned}
\label{eq:app-evaluation-metrics}
\end{equation}

For the neural experiment, the local training error is
\begin{equation}
e_{\mathrm{train},k}
:=\frac1B\sum_{i=1}^{B}
\operatorname{TV}(R_{k+1,i},T_{k,i}),
\label{eq:app-training-teacher-error}
\end{equation}
and the induced marginal fitting error is
\begin{equation}
e_{\mathrm{marg},k}
:=\operatorname{TV}\left(q_{k+1},q_kP_{\eta,p,q_k}\right).
\label{eq:app-marginal-teacher-error}
\end{equation}

For the held-out bank, we construct teachers using the same frozen transition
as the training bank:

$$
    T^{\mathrm h}_{k,i}
    :=R^{\mathrm h}_{k,i}P_{\eta,p,q_k}.
$$

The held-out fitting error is
\begin{equation}
e_{\mathrm{held},k}
:=\frac1{B_{\mathrm h}}\sum_{i=1}^{B_{\mathrm h}}
\operatorname{TV}\left(R^{\mathrm h}_{k+1,i},T^{\mathrm h}_{k,i}\right).
\label{eq:app-heldout-teacher-error}
\end{equation}

Results for the headline neural experiment report means across seeds, with
pointwise \(95\%\) \(t\) intervals
\begin{equation}
\bar m\ \pm\ t_{0.975,\,n-1}\frac{s}{\sqrt n},
\label{eq:app-t-confidence-interval}
\end{equation}
where \(n\) is the number of seeds and \(s\) the sample standard deviation
across them. The finite-batch sweep has too few seeds per condition for a
meaningful interval, so we report the median across seeds and the full seed
range instead.

To test whether the finite latent bank materially affects evaluation, each
final generator is additionally reevaluated using fresh Gaussian banks of
sizes \(256\), \(1024\), and \(4096\). The \(4096\)-latent marginal serves as
the reference. This diagnostic changes only the bank used to evaluate a fixed
generator; it is distinct from the finite-batch sweep, which changes the
fitting procedure during the rollout.

%% file: sections/appendix/appendix_results.tex
\clearpage
\newpage
\section{Additional Results}
\label{app:additional-results}

\subsection{Exact transport and particle realization}
\label{app:additional-tabular-results}

The main text establishes that the tabular model transports mass toward the target and that one frozen particle transition is consistent with the corresponding logit update. Here, we test each link in this construction over the
complete trajectory: whether the stop-gradient update follows the intended probability flow, whether its descent obeys the predicted dissipation identity, and whether the particle realization introduces any error beyond Monte Carlo sampling.

\paragraph{The tabular update follows the intended flow.}
Figure~\ref{fig:app-tabular-diagnostics} reports KL and total variation along the stop-gradient chart update, the probability-space Euler reference, the sampled state--edge update, and the differentiate-through-target control. The
stop-gradient chart update closely tracks the Euler reference throughout the descent, with both approaching their numerical floors. Replacing full enumeration with unbiased state--edge sampling preserves the same convergence behaviour, showing that stochastic evaluation does not alter the underlying
dynamics. By contrast, differentiating through the transport target produces only limited descent and remains far from the target. This confirms that the stop-gradient in \eqref{eq:full-batch-stop-gradient-loss} is essential: it implements the prescribed vector field, whereas the differentiated-target objective does not.

\paragraph{The descent obeys the gradient-flow identity.}
Convergence toward \(p\) alone would not establish that the update follows the proposed Maas gradient flow: different dynamics can share the same equilibrium while following different transport paths and descent rates. dissipation prescribed by the Maas geometry. We measure the corresponding finite-difference rate as
\[
    D_k
    :=
    -\frac{
        \mathrm{KL}(q_{k+1}\,\|\,p)
        -
        \mathrm{KL}(q_k\,\|\,p)
    }{\Delta t}
\]
Here, \(q_k\) and \(q_{k+1}\) are the model laws before and after the update, \(p\) is the fixed target, and the negative sign makes \(D_k\) positive when KL decreases. Thus, \(D_k\) measures the observed KL decrease per unit flow time. The continuous-time Maas flow predicts the corresponding instantaneous rate to be \(\mathcal I_K(q_k\,\|\,p)\), the discrete relative Fisher information that aggregates variations of the log-density ratio across the edges of \(K\), weighted by their mobility. As shown in Figure~\ref{fig:app-tabular-diagnostics}~(bottom left), the measured and predicted rates agree to within one percent until both become dominated by numerical precision. The remaining panels verify the algebra underlying this evolution throughout the rollout. The relative residuals for mass conservation, current antisymmetry, the attraction--repulsion decomposition, and the logarithmic-mean identity remain below \(4.1\times10^{-6}\). The absolute mass-conservation defect remains at approximately \(10^{-16}\).

\begin{figure}[t]
\centering
\includegraphics[width=\linewidth]{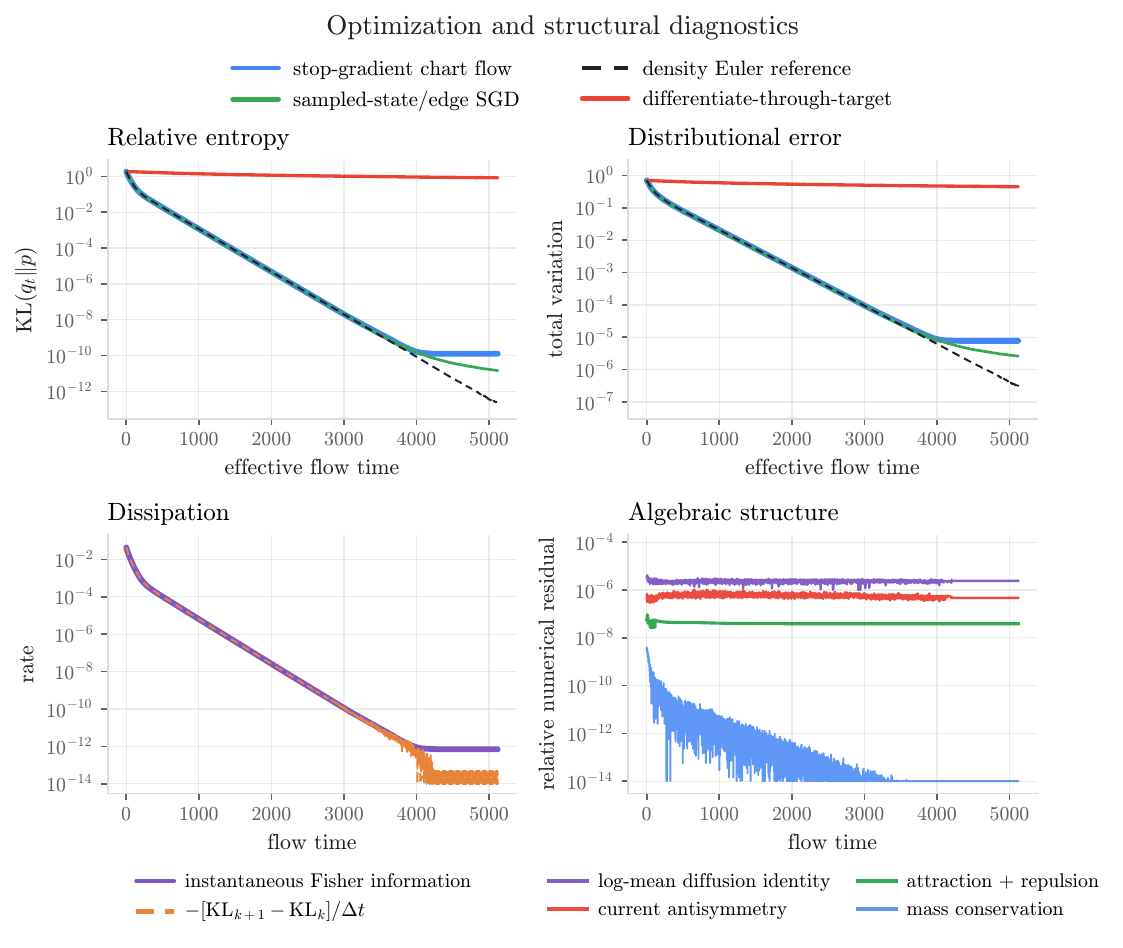}
\caption{End-to-end diagnostics for the exact tabular flow.
\emph{Top left:} relative KL for the stop-gradient chart update,
probability-space Euler reference, sampled state--edge implementation, and
differentiate-through-target control. \emph{Top right:} the corresponding
total-variation trajectories. \emph{Bottom left:} predicted instantaneous
dissipation and measured finite-difference KL decrease.
\emph{Bottom right:} numerical residuals for four structural identities.
The intended chart update, probability-space reference, and sampled
implementation agree throughout the resolved descent, whereas the
differentiate-through-target control does not.}
\label{fig:app-tabular-diagnostics}
\end{figure}

\paragraph{The chart and particle updates exhibit the predicted scaling.}
We first test the two convergence predictions governing the numerical realization. Across five decades of step size, the discrepancy between the chart update and the explicit probability-space Euler step has fitted slope
\(2.00\), verifying the \(O(\eta^2)\) local consistency predicted by \eqref{eq:chart-law-local-consistency}. Across the particle-count sweep, the empirical post-jump law converges to the exact transported law with fitted
slope \(-0.51\), matching the canonical \(O(M^{-1/2})\) Monte Carlo rate. Thus, decreasing the step size recovers the intended probability flow, while increasing the particle population recovers its exact law-level transition.
The complementary test of \eqref{eq:particle-chart-consistency}, reported in Figure~\ref{fig:particle-batch}~(left) and Table~\ref{tab:app-lift-order}, verifies the same second-order agreement between the frozen transition and the logit update. Consequently, both the chart update and the frozen particle transition reproduce the probability-space Euler evolution to first order in \(\eta\).


\paragraph{The frozen transition realizes the Euler step.}
At the representative outer step used for the particle control, the production step size remains comfortably within the admissible range of \eqref{eq:particle-cfl-condition}. At the law level, the exact post-jump distribution agrees with the explicit probability-space Euler update to machine precision, directly verifying \eqref{eq:particle-law-euler-identity}; the corresponding residuals are reported in Table~\ref{tab:app-lift-order}. At the particle level, the observed root-mean-square error is within two percent of the scale predicted by multinomial sampling, the empirical two-standard-deviation coverage is within half a percentage point of its nominal value, and no state deviates by more
than three predicted standard deviations. Together with the
\(O(M^{-1/2})\) convergence above, these diagnostics establish both parts of the particle construction: the frozen transition realizes the Euler update exactly at the law level, and finite particle populations recover that law at
the expected Monte Carlo rate.


\subsection{Finite neural amortization}
\label{app:additional-neural-results}

The preceding experiments validate the transport update without representation
or optimization error. We now isolate the approximation introduced when a
single finite-capacity generator must absorb the successive conditional
teachers. We examine this at three levels: the accuracy of each local fit, the
accumulated deviation from the conditional oracle, and the sensitivity of the
reported marginal to finite latent banks.

\paragraph{The generator learns the local transport update.}
At the final outer step, the training, held-out, and induced marginal teacher
errors are all approximately \(2.1\times10^{-4}\) in total variation. The
training and held-out conditional errors are indistinguishable even though the
held-out latents are never used for optimization, leaving no measurable
generalization gap across latent codes. The equally small marginal error shows
that this conditional accuracy transfers directly to the model law rather
than being lost when the conditionals are averaged.

\paragraph{The neural rollout preserves target-directed descent.}
Finite inner optimization does not guarantee that every fitted update
decreases KL. Nevertheless, every seed retains the strong overall descent of
the exact flow. Small one-step increases occur during the rollout, but their
combined magnitude remains below one percent of the cumulative decrease.
Finite fitting therefore introduces minor local fluctuations without
disrupting the global movement toward the target. Consistently,
Figure~\ref{fig:mlp-final} shows that every final marginal recovers the three
target modes and their unequal allocation of mass. The individual runs closely
resemble their mean, ruling out an apparently accurate average produced by
qualitatively different endpoints.

\begin{figure}[t]
\centering
\includegraphics[width=\linewidth]{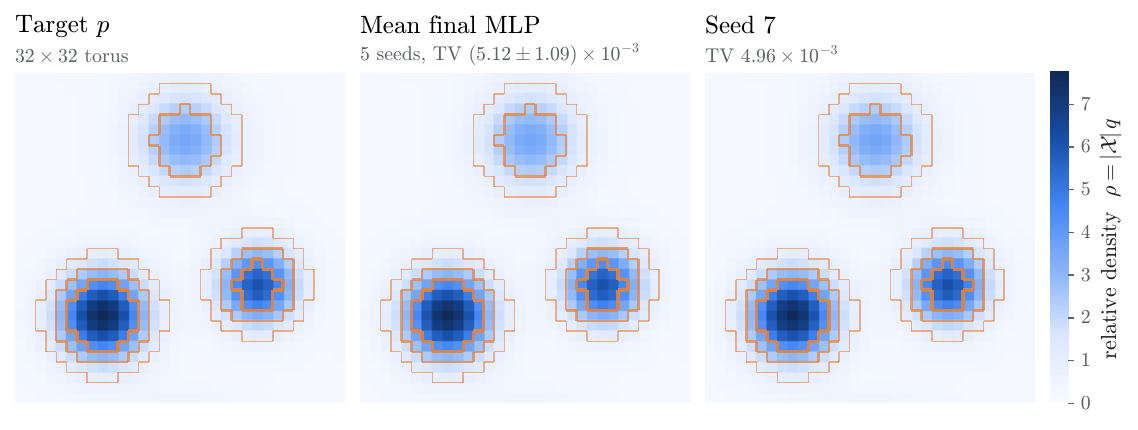}
\caption{\textbf{Final neural marginals} across five seeds. The panels show the target, the mean final marginal, and one representative seed. Orange contours mark the fixed target-density levels. Every seed recovers the three modes and
their unequal mass allocation, with limited variation across runs (final total variation $(5.12 \pm 1.09)\times10^{-3}$).}
\label{fig:mlp-final}
\end{figure}

\paragraph{Larger fitting batches improve oracle tracking.}
The conditional oracle applies every transport teacher exactly, so its
difference from the neural rollout isolates error accumulated through
amortization rather than error in the transport itself.
Figure~\ref{fig:particle-batch}~(right) shows that the final neural--oracle gap
falls sharply as the fitting batch grows, across all tested step sizes, before
levelling off at a small residual. The time-averaged gap follows the same
pattern, showing that the improvement holds throughout the rollout rather than
only at its endpoint. Because the full-population condition removes latent
subsampling but retains a small gap, the remaining discrepancy can be
attributed to finite representation and optimization rather than to the
teacher construction.

\paragraph{Endpoint estimates are stable across latent banks.}
We finally reevaluate each trained generator using progressively larger fresh
latent banks, without changing its parameters. The resulting variation in the
estimated marginal decreases with bank size and remains well below the final
distance to the target even for the smallest evaluation bank. The reported
endpoint error is therefore not an artefact of Monte Carlo variation in the
evaluation marginal. This diagnostic is distinct from the preceding batch
sweep: fitting-batch size changes the transport updates learned during the
rollout, whereas evaluation-bank size changes only how the marginal of an
already trained generator is estimated.

%% file: sections/appendix/appendix_proof.tex
\section{Proofs}
\label{app:discrete-drifting-proofs}

Throughout this appendix, the target probability mass function \(p\) is
fixed and strictly positive. For any strictly positive probability mass
function \(q\) on \(\X\), we write
\begin{equation}
\label{eq:appendix-relative-log-density}
    \ell_{p,q}(x)
    :=
    \log\frac{\rho_q(x)}{\rho_p(x)},
    \qquad x\in\X,
\end{equation}
and, for any \(h:\X\to\mathbb R\),
\begin{equation}
\label{eq:q-expectation-notation}
    \langle h\rangle_q
    :=
    \sum_{x\in\X} q(x)h(x).
\end{equation}

The proofs rely repeatedly on three elementary identities: the differential
of the logit chart, the representation of the Maas current in logit
coordinates, and the fact that the resulting logit drift is centered under
the current model law. We collect them first.

\subsection{Preliminary identities}
Let \(\Phi:\mathbb R^\X\to\mathcal P_*(\X)\) denote the logit chart \(\Phi(g)=q_g\) defined in~\eqref{eq:pi-logit-parameterization}. For \(g,h\in\mathbb R^\X\), we write \(D\Phi(g)[h]\) for the directional derivative of \(\Phi\) at \(g\) in the direction \(h\), namely \[ D\Phi(g)[h] := \left. \frac{\mathrm d}{\mathrm d\varepsilon} \Phi(g+\varepsilon h) \right|_{\varepsilon=0}. \] Equivalently, for each \(x\in\X\), \[ D\Phi(g)[h](x) = \left. \frac{\mathrm d}{\mathrm d\varepsilon} q_{g+\varepsilon h}(x) \right|_{\varepsilon=0}. \]
\begin{lemma}[Logit differential and current identities]
\label{lem:logit-current-identities}
For every
\(g,h\in\mathbb R^\X\),
\begin{equation}
\label{eq:appendix-logit-differential}
    D\Phi(g)[h](x)
    =
    q_g(x)
    \left(
        h(x)-\langle h\rangle_{q_g}
    \right),
    \qquad x\in\X.
\end{equation}
Moreover, the current admits the logit-coordinate representation
\begin{equation}
\label{eq:appendix-current-logit-form}
    J_{p,q_g}(x,y)
    =
    q_g(x)K(x,y)L\bigl(\delta_g(x,y)\bigr)
    \left(
        s_p(x,y)-\delta_g(x,y)
    \right),
\end{equation}
and therefore
\begin{equation}
\label{eq:appendix-law-logit-identity}
    -\sum_{y\in\X}J_{p,q_g}(x,y)
    =
    q_g(x)\Delta_g(x).
\end{equation}
In particular, the logit drift is centered under the current model law:
\begin{equation}
\label{eq:appendix-drift-centering}
    \langle\Delta_g\rangle_{q_g}
    =
    \sum_{x\in\X}q_g(x)\Delta_g(x)
    =
    0.
\end{equation}
\end{lemma}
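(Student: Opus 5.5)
The lemma has four parts, and I will take them in order; each later part relies on the earlier ones.

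\emph{Logit differential.} Differentiate $q_{g+\varepsilon h}(x)=\pi(x)e^{g(x)+\varepsilon h(x)}/Z_\pi(g+\varepsilon h)$ at $\varepsilon=0$ with the quotient rule. The numerator contributes $q_g(x)h(x)$. The normalizer gives
\[
\frac{\mathrm d}{\mathrm d\varepsilon}\log Z_\pi(g+\varepsilon h)\Big|_{\varepsilon=0}=\frac{\sum_z\pi(z)e^{g(z)}h(z)}{Z_\pi(g)}=\langle h\rangle_{q_g}.
\]
Together these yield \eqref{eq:appendix-logit-differential}.

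\emph{Current in logit form.} With $q=q_g$ we have $\rho_q=e^{g}/Z_\pi(g)$, so $\log\rho_q(y)-\log\rho_q(x)=g(y)-g(x)=\delta_g(x,y)$; the normalizer cancels. Hence the bracket in \eqref{eq:conductance-and-current} is
\[
\ell_{p,q}(x)-\ell_{p,q}(y)=s_p(x,y)-\delta_g(x,y).
\]
For the mobility, the logarithmic mean is positively $1$-homogeneous: substituting in the integral,
\[
m(a,b)=\int_0^1 a^s b^{1-s}\,\mathrm ds = a\int_0^1 (b/a)^{1-s}\,\mathrm ds = a\,m(1,b/a).
\]
Applying this with $a=\rho_q(x)$ and $b/a=e^{\delta_g(x,y)}$ gives $m_{\rho_q}(x,y)=\rho_q(x)L(\delta_g(x,y))$. (One should check that $m(1,e^r)=L(r)$ matches the definition in \eqref{eq:logmean-ratio-function}, including the value $1$ at $r=0$.) Then $\pi(x)\rho_q(x)=q_g(x)$ yields \eqref{eq:appendix-current-logit-form}.

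\emph{Divergence identity.} Summing \eqref{eq:appendix-current-logit-form} over $y$ and negating turns $s_p-\delta_g$ into $\delta_g-s_p$. This gives $q_g(x)\Delta_g(x)$ directly from definition \eqref{eq:normalization-free-logit-drift}, which is \eqref{eq:appendix-law-logit-identity}.

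\emph{Centering.} Sum \eqref{eq:appendix-law-logit-identity} over $x$, so that $\langle\Delta_g\rangle_{q_g}=-\sum_{x,y}J_{p,q_g}(x,y)$. This double sum vanishes because $J_{p,q}$ is antisymmetric: reversibility $\pi(x)K(x,y)=\pi(y)K(y,x)$ and symmetry of $m$ make the prefactor $\pi(x)K(x,y)m_{\rho_q}(x,y)$ symmetric, while the bracket is antisymmetric. The only point needing care is to take antisymmetry from the original form \eqref{eq:conductance-and-current}, where it is evident, rather than from the logit form. I expect no real obstacle here; the main care is the homogeneity step and the $r=0$ case of $L$.
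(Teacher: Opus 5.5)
Your proposal is correct and follows essentially the same route as the paper's proof. Both arguments:
\begin{itemize}
\item differentiate the chart (you via $\frac{\mathrm d}{\mathrm d\varepsilon}\log Z_\pi$, the paper via the quotient rule);
\item use positive $1$-homogeneity of the logarithmic mean to get $m_{\rho_g}(x,y)=\rho_g(x)L(\delta_g(x,y))$;
\item sum over $y$ and compare with the definition of $\Delta_g$;
\item obtain centering from antisymmetry of $J_{p,q_g}$, read off the original form of the current via reversibility and symmetry of $m$.
\end{itemize}
Your explicit integral check of homogeneity and of $m(1,e^r)=L(r)$, including $r=0$, fills in details the paper only asserts.
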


\begin{proof}
We first differentiate the logit parametrization. For a perturbation
\(g+\varepsilon h\),
\[
    q_{g+\varepsilon h}(x)
    =
    \frac{\pi(x)e^{g(x)+\varepsilon h(x)}}
    {Z_\pi(g+\varepsilon h)}.
\]
Differentiating at \(\varepsilon=0\) gives
\begin{align*}
    D\Phi(g)[h](x)
    &=
    \frac{\pi(x)e^{g(x)}h(x)}{Z_\pi(g)}
    -
    \frac{\pi(x)e^{g(x)}}{Z_\pi(g)^2}
    \sum_{z\in\X}\pi(z)e^{g(z)}h(z)
    \\
    &=
    q_g(x)
    \left(
        h(x)-\sum_{z\in\X}q_g(z)h(z)
    \right),
\end{align*}
which is~\eqref{eq:appendix-logit-differential}. The second term is the
correction induced by normalization; in particular,
\(D\Phi(g)[\mathbf 1]=0\), reflecting the gauge invariance
\(q_{g+c\mathbf 1}=q_g\).

We next express the current in the same coordinates. By positive
\(1\)-homogeneity of the logarithmic mean,
\[
    m_{\rho_g}(x,y)
    =
    \rho_g(x)
    m\left(
        1,\frac{\rho_g(y)}{\rho_g(x)}
    \right).
\]
Since
\[
    \frac{\rho_g(y)}{\rho_g(x)}
    =
    e^{g(y)-g(x)}
    =
    e^{\delta_g(x,y)},
\]
the definition of \(L\) yields
\begin{equation}
\label{eq:appendix-mobility-logit-form}
    m_{\rho_g}(x,y)
    =
    \rho_g(x)L\bigl(\delta_g(x,y)\bigr).
\end{equation}
At the same time,
\begin{align*}
    \ell_{p,q_g}(x)-\ell_{p,q_g}(y)
    &=
    \bigl(
        \log\rho_g(x)-\log\rho_g(y)
    \bigr)
    -
    \bigl(
        \log\rho_p(x)-\log\rho_p(y)
    \bigr)
    \\
    &=
    s_p(x,y)-\delta_g(x,y).
\end{align*}
Substituting these identities into~\eqref{eq:conductance-and-current}, and
using \(q_g(x)=\pi(x)\rho_g(x)\), gives
\eqref{eq:appendix-current-logit-form}. Hence
\begin{align*}
    -\sum_{y\in\X}J_{p,q_g}(x,y)
    &=
    q_g(x)
    \sum_{y\in\X}
        K(x,y)L\bigl(\delta_g(x,y)\bigr)
        \left(
            \delta_g(x,y)-s_p(x,y)
        \right)
    \\
    &=
    q_g(x)\Delta_g(x),
\end{align*}
which proves~\eqref{eq:appendix-law-logit-identity}.

It remains to establish the centering identity. Reversibility of \(K\) and
symmetry of the logarithmic mean give
\[
    \pi(x)K(x,y)m_{\rho_g}(x,y)
    =
    \pi(y)K(y,x)m_{\rho_g}(y,x).
\]
Since
\[
    \ell_{p,q_g}(y)-\ell_{p,q_g}(x)
    =
    -
    \bigl(
        \ell_{p,q_g}(x)-\ell_{p,q_g}(y)
    \bigr),
\]
the current is antisymmetric:
\begin{equation}
\label{eq:appendix-current-antisymmetry}
    J_{p,q_g}(y,x)
    =
    -J_{p,q_g}(x,y).
\end{equation}
Summing~\eqref{eq:appendix-law-logit-identity} over \(x\) therefore gives
\begin{align*}
    \sum_{x\in\X}q_g(x)\Delta_g(x)
    &=
    -\sum_{x,y\in\X}J_{p,q_g}(x,y)
    \\
    &=
    -\frac12
    \sum_{x,y\in\X}
    \left(
        J_{p,q_g}(x,y)+J_{p,q_g}(y,x)
    \right)
    \\
    &=0.
\end{align*}
This is~\eqref{eq:appendix-drift-centering}.
\end{proof}

\subsection{Logit representation of the gradient flow}

\begin{proof}[Proof of Proposition~\ref{prop:normalization-free-flow}]
Let \(q_t=q_{g_t}\). Applying the chain rule to the logit chart and using
Lemma~\ref{lem:logit-current-identities} gives
\begin{equation}
\label{eq:appendix-time-chart-differential}
    \dot q_t(x)
    =
    q_t(x)
    \left(
        \dot g_t(x)
        -
        \langle\dot g_t\rangle_{q_t}
    \right).
\end{equation}

Suppose first that \(q_t\) satisfies the probability-coordinate gradient
flow in~\eqref{eq:target-kl-gradient-flow}. By
\eqref{eq:appendix-law-logit-identity},
\begin{equation}
\label{eq:appendix-probability-flow-logit-form}
    \dot q_t(x)
    =
    q_t(x)\Delta_{g_t}(x).
\end{equation}
Since \(q_t(x)>0\) for every \(x\in\X\), comparison with
\eqref{eq:appendix-time-chart-differential} yields
\[
    \dot g_t(x)
    -
    \langle\dot g_t\rangle_{q_t}
    =
    \Delta_{g_t}(x).
\]
Thus
\[
    \dot g_t
    =
    \Delta_{g_t}
    +
    c_t\mathbf 1,
    \qquad
    c_t
    :=
    \langle\dot g_t\rangle_{q_t}.
\]
The scalar \(c_t\) is continuous because \(t\mapsto g_t\) is continuously
differentiable and \(g\mapsto q_g\) is smooth. 

Conversely, suppose that
\[
    \dot g_t
    =
    \Delta_{g_t}+c_t\mathbf 1
\]
for some continuous scalar function \(c_t\). The centering identity
\eqref{eq:appendix-drift-centering} implies
\[
    \langle\dot g_t\rangle_{q_t}
    =
    \langle\Delta_{g_t}\rangle_{q_t}+c_t
    =
    c_t.
\]
Substituting this into~\eqref{eq:appendix-time-chart-differential} eliminates
the gauge term and gives
\begin{align*}
    \dot q_t(x)
    &=
    q_t(x)\Delta_{g_t}(x)
    \\
    &=
    -\sum_{y\in\X}J_{p,q_t}(x,y),
\end{align*}
where the final identity again follows from
\eqref{eq:appendix-law-logit-identity}. Hence \(q_t\) satisfies
\eqref{eq:target-kl-gradient-flow}, completing the equivalence.
\end{proof}

\subsection{Local consistency and entropy descent}

\begin{proof}[Proof of Proposition~\ref{prop:chart-euler-consistency}]
Fix \(g\in\mathbb R^\X\), and abbreviate
\[
    q:=q_g,
    \qquad
    \Delta:=\Delta_g,
    \qquad
    \ell(x)
    :=
    \log\frac{\rho_q(x)}{\rho_p(x)}.
\]

We first establish the first-order evolution of the probability law. Recall
that \(\Phi(g)=q_g\). By definition of the directional derivative,
\[
    D\Phi(g)[\Delta]
    =
    \left.
    \frac{\mathrm d}{\mathrm d\varepsilon}
    q_{g+\varepsilon\Delta}
    \right|_{\varepsilon=0}.
\]
Since \(\Phi\) is smooth, Taylor's theorem gives
\begin{equation}
\label{eq:appendix-chart-taylor}
    q_{g+\eta\Delta}
    =
    q_g
    +
    \eta D\Phi(g)[\Delta]
    +
    O(\eta^2).
\end{equation}
By~\eqref{eq:appendix-logit-differential},
\[
    D\Phi(g)[\Delta](x)
    =
    q(x)
    \left(
        \Delta(x)-\langle\Delta\rangle_q
    \right).
\]
The centering identity~\eqref{eq:appendix-drift-centering} gives
\(\langle\Delta\rangle_q=0\), and hence
\[
    D\Phi(g)[\Delta](x)
    =
    q(x)\Delta(x).
\]
Substituting this into~\eqref{eq:appendix-chart-taylor} yields
\[
    q_{g+\eta\Delta_g}
    =
    q_g
    +
    \eta q_g\odot\Delta_g
    +
    O(\eta^2),
\]
which proves~\eqref{eq:chart-law-local-consistency}.

We next study the corresponding change in relative entropy. Define
\[
    \mathcal H_p(q)
    :=
    \mathrm{KL}(q\,\|\,p)
    =
    \sum_{x\in\X}
        q(x)\log\frac{q(x)}{p(x)},
\]
and
\[
    \mathcal G(g)
    :=
    \mathcal H_p(q_g).
\]
Thus \(\mathcal G=\mathcal H_p\circ\Phi\).

For a perturbation \(v\in\mathbb R^\X\), the directional derivative of
\(\mathcal H_p\) at \(q\) is
\begin{align*}
    D\mathcal H_p(q)[v]
    &:=
    \left.
    \frac{\mathrm d}{\mathrm d\varepsilon}
    \mathcal H_p(q+\varepsilon v)
    \right|_{\varepsilon=0}
    \\
    &=
    \left.
    \frac{\mathrm d}{\mathrm d\varepsilon}
    \sum_{x\in\X}
        \bigl(q(x)+\varepsilon v(x)\bigr)
        \log
        \frac{q(x)+\varepsilon v(x)}{p(x)}
    \right|_{\varepsilon=0}
    \\
    &=
    \sum_{x\in\X}
        v(x)
        \left(
            \log\frac{q(x)}{p(x)}+1
        \right).
\end{align*}

Now let \(h\in\mathbb R^\X\) be an arbitrary perturbation of the logits.
Since
\[
    q_{g+\varepsilon h}
    =
    q
    +
    \varepsilon D\Phi(g)[h]
    +
    o(\varepsilon),
\]
we obtain
\begin{align}
    D\mathcal G(g)[h]
    &:=
    \left.
    \frac{\mathrm d}{\mathrm d\varepsilon}
    \mathcal H_p(q_{g+\varepsilon h})
    \right|_{\varepsilon=0}
    \nonumber\\
    &=
    D\mathcal H_p(q)
    \bigl[
        D\Phi(g)[h]
    \bigr]
    \nonumber\\
    &=
    \sum_{x\in\X}
        D\Phi(g)[h](x)
        \left(
            \log\frac{q(x)}{p(x)}+1
        \right).
\label{eq:appendix-KL-first-variation}
\end{align}

Because \(q_g\) is normalized for every \(g\),
\[
    \sum_{x\in\X}q_g(x)=1.
\]
Taking the directional derivative of this identity in the direction \(h\)
gives
\[
    \sum_{x\in\X}D\Phi(g)[h](x)=0.
\]
Hence the constant \(+1\) in
\eqref{eq:appendix-KL-first-variation} makes no contribution, and
\[
    D\mathcal G(g)[h]
    =
    \sum_{x\in\X}
        D\Phi(g)[h](x)
        \log\frac{q(x)}{p(x)}.
\]
Since
\[
    \frac{q(x)}{p(x)}
    =
    \frac{\pi(x)\rho_q(x)}
         {\pi(x)\rho_p(x)}
    =
    \frac{\rho_q(x)}{\rho_p(x)},
\]
we have
\[
    \log\frac{q(x)}{p(x)}
    =
    \ell(x).
\]

Taking \(h=\Delta\), and using
\(D\Phi(g)[\Delta](x)=q(x)\Delta(x)\), therefore gives
\begin{align*}
    D\mathcal G(g)[\Delta]
    &=
    \sum_{x\in\X}
        q(x)\Delta(x)\ell(x)
    \\
    &=
    -\sum_{x,y\in\X}
        J_{p,q}(x,y)\ell(x),
\end{align*}
where the second equality follows from
\eqref{eq:appendix-law-logit-identity}.

We now use antisymmetry of the current. Relabelling \(x\) and \(y\) in the
double sum gives
\begin{align*}
    \sum_{x,y\in\X}
        J_{p,q}(x,y)\ell(y)
    &=
    \sum_{x,y\in\X}
        J_{p,q}(y,x)\ell(x)
    \\
    &=
    -\sum_{x,y\in\X}
        J_{p,q}(x,y)\ell(x).
\end{align*}
Therefore,
\begin{align*}
    \sum_{x,y\in\X}
        J_{p,q}(x,y)\ell(x)
    &=
    \frac12
    \sum_{x,y\in\X}
        J_{p,q}(x,y)
        \bigl(
            \ell(x)-\ell(y)
        \bigr).
\end{align*}
Using the definition of the current
\eqref{eq:conductance-and-current}, we obtain
\begin{align*}
    D\mathcal G(g)[\Delta]
    &=
    -\frac12
    \sum_{x,y\in\X}
        \pi(x)K(x,y)m_{\rho_q}(x,y)
        \bigl(
            \ell(x)-\ell(y)
        \bigr)^2
    \\
    &=
    -\mathcal I_K(q\,\|\,p).
\end{align*}

Since \(\mathcal G\) is smooth, Taylor's theorem now gives
\begin{align*}
    \mathrm{KL}
    \bigl(
        q_{g+\eta\Delta_g}\,\|\,p
    \bigr)
    &=
    \mathcal G(g+\eta\Delta)
    \\
    &=
    \mathcal G(g)
    +
    \eta D\mathcal G(g)[\Delta]
    +
    O(\eta^2)
    \\
    &=
    \mathrm{KL}(q_g\,\|\,p)
    -
    \eta\mathcal I_K(q_g\,\|\,p)
    +
    O(\eta^2),
\end{align*}
which proves~\eqref{eq:chart-kl-local-descent}.

It remains to establish strict descent away from the target. Every summand
in \(\mathcal I_K(q\,\|\,p)\) is nonnegative. Moreover, since
\(\pi(x)>0\) for every \(x\in\X\) and \(m_{\rho_q}(x,y)>0\) whenever
\(\rho_q(x),\rho_q(y)>0\), the identity
\[
    \mathcal I_K(q\,\|\,p)=0
\]
implies that, for every pair \(x,y\in\X\) such that \(K(x,y)>0\),
\[
    \bigl(\ell(x)-\ell(y)\bigr)^2=0.
\]
Hence
\[
    \ell(x)=\ell(y)
    \qquad
    \text{whenever }K(x,y)>0.
\]
By irreducibility, for any \(x,y\in\X\) there exists a finite path
\[
    x=x_0,x_1,\ldots,x_n=y
\]
such that
\[
    K(x_i,x_{i+1})>0,
    \qquad
    i=0,\ldots,n-1.
\]
Hence
\[
    \ell(x_0)
    =
    \ell(x_1)
    =
    \cdots
    =
    \ell(x_n),
\]
and therefore \(\ell(x)=\ell(y)\). Since \(x\) and \(y\) are arbitrary,
\(\ell\) is constant on \(\X\).

Thus, for some \(a\in\mathbb R\),
\[
    \log\frac{\rho_q(x)}{\rho_p(x)}
    =
    a
    \qquad
    \text{for all }x\in\X,
\]
or equivalently
\[
    q(x)=e^a p(x)
    \qquad
    \text{for all }x\in\X.
\]
Since both \(q\) and \(p\) are probability distributions,
\[
    1
    =
    \sum_x q(x)
    =
    e^a\sum_x p(x)
    =
    e^a,
\]
so \(a=0\) and \(q=p\). Hence
\[
    \mathcal I_K(q\,\|\,p)=0
    \quad\Longleftrightarrow\quad
    q=p.
\]

Consequently, if \(q_g\neq p\), then
\(\mathcal I_K(q_g\,\|\,p)>0\). For sufficiently small \(|\eta|\), write
the Taylor remainder as \(R(\eta)\), with
\[
    |R(\eta)|
    \leq
    C\eta^2
\]
for some constant \(C>0\). Then
\begin{align*}
    \mathrm{KL}
    \bigl(
        q_{g+\eta\Delta_g}\,\|\,p
    \bigr)
    -
    \mathrm{KL}(q_g\,\|\,p)
    &=
    -\eta\mathcal I_K(q_g\,\|\,p)
    +
    R(\eta)
    \\
    &\leq
    -\eta\mathcal I_K(q_g\,\|\,p)
    +
    C\eta^2
    \\
    &=
    -\eta
    \left(
        \mathcal I_K(q_g\,\|\,p)-C\eta
    \right).
\end{align*}
For
\[
    0<\eta<
    \frac{\mathcal I_K(q_g\,\|\,p)}{C},
\]
the right-hand side is strictly negative. Thus the update strictly decreases
\(\mathrm{KL}(q_g\,\|\,p)\) for every sufficiently small positive
\(\eta\).
\end{proof}

\subsection{Particle realization and minimality}

The particle construction rests on the elementary observation that an
antisymmetric signed current can be represented by nonnegative directional
jump rates by assigning each edge only its positive current component. The
following lemma records the resulting generator and its induced
law-level evolution.

\begin{lemma}[Properties of the positive-part generator]
\label{lem:positive-part-generator}
Let \(p\) and \(q\) be strictly positive probability mass functions, and
let \(A_{p,q}\) be defined by
\eqref{eq:minimal-jump-rate}--\eqref{eq:particle-generator}. Then
\(A_{p,q}\) is a continuous-time Markov generator and
\begin{equation}
\label{eq:appendix-generator-law-identity}
    (qA_{p,q})(x)
    =
    -\sum_{y\in\X}J_{p,q}(x,y)
    =
    q(x)\Delta_{p,q}(x).
\end{equation}
If~\eqref{eq:particle-cfl-condition} holds, then
\(P_{\eta,p,q}=I+\eta A_{p,q}\) is a Markov transition kernel and
\begin{equation}
\label{eq:appendix-particle-euler-identity}
    qP_{\eta,p,q}
    =
    q+\eta q\odot\Delta_{p,q}.
\end{equation}
\end{lemma}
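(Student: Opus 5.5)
The plan is to verify the generator property first, then the law identity by a row-vector computation, and finally deduce the kernel statements from the step-size condition.

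\emph{Generator property.} For \(x\neq y\), the entry \(a_{p,q}(x,y)=[J_{p,q}(x,y)]_+/q(x)\) is nonnegative, since \(q(x)>0\) and the positive part is nonnegative. The diagonal of \eqref{eq:particle-generator} is defined so that each row sums to zero. Together these are exactly the defining properties of a continuous-time Markov generator.

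\emph{Law identity.} For fixed \(x\), expand
\[
(qA_{p,q})(x)=\sum_{y\neq x}q(y)a_{p,q}(y,x)-q(x)\sum_{z\neq x}a_{p,q}(x,z)=\sum_{y\neq x}\bigl([J_{p,q}(y,x)]_+-[J_{p,q}(x,y)]_+\bigr).
\]
Antisymmetry \(J_{p,q}(y,x)=-J_{p,q}(x,y)\) comes from the argument giving \eqref{eq:appendix-current-antisymmetry}; that argument applies verbatim to any strictly positive \(q\), because it uses only reversibility and the symmetry of \(m\). It lets us write \([J(y,x)]_+=[-J(x,y)]_+=[J(x,y)]_-\). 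Applying \(u_+-u_-=u\) then gives
\[
(qA_{p,q})(x)=-\sum_{y\neq x}J_{p,q}(x,y).
\]
The diagonal term \(J_{p,q}(x,x)\) vanishes because the log-difference factor is zero. So the sum may be taken over all \(y\), which yields the first equality. The second equality is the definition of \(\Delta_{p,q}\) multiplied by \(q(x)>0\).

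\emph{Markov kernel.} The rows of \(I+\eta A_{p,q}\) sum to one because the rows of \(A_{p,q}\) sum to zero. Off-diagonal entries \(\eta a_{p,q}(x,y)\) are nonnegative. The diagonal entries \(1-\eta\sum_{z\neq x}a_{p,q}(x,z)\) are nonnegative precisely under \eqref{eq:particle-cfl-condition}. Linearity then gives \(qP_{\eta,p,q}=q+\eta qA_{p,q}\), and substituting the law identity yields \eqref{eq:appendix-particle-euler-identity}.

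The only step requiring care is the sign bookkeeping in the positive/negative part cancellation, together with confirming that antisymmetry holds for a general positive \(q\) and not only for \(q=q_g\). That check amounts to rereading the earlier derivation with \(\rho_q\) in place of \(\rho_g\).
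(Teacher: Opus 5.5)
Your proposal is correct and follows the paper's proof essentially step for step. Nonnegative off-diagonal rates with zero row sums give the generator property, and the row-vector expansion reduces $(qA_{p,q})(x)$ to $\sum_{y\neq x}\bigl([J_{p,q}(y,x)]_+-[J_{p,q}(x,y)]_+\bigr)$. Antisymmetry, the positive/negative-part identity and $J_{p,q}(x,x)=0$ then give the law identity, and the step-size condition keeps the stay probabilities nonnegative. Your check that antisymmetry holds for every strictly positive $q$ is also sound, since any such $q$ equals $q_g$ with $g=\log\rho_q$.
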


\begin{proof}
By construction, the off-diagonal entries of \(A_{p,q}\) are nonnegative,
while its diagonal entries are chosen so that every row sums to zero.
Hence \(A_{p,q}\) is a continuous-time Markov generator.

We regard probability distributions as row vectors. For every \(x\in\X\),
\begin{align*}
    (qA_{p,q})(x) &= \sum_{y\in \mathcal{X}} q(y) A_{p,q}(y,x)\\ 
    &=
    \sum_{y\neq x}
        q(y)a_{p,q}(y,x)
    -
    q(x)
    \sum_{y\neq x}
        a_{p,q}(x,y)
    \\
    &=
    \sum_{y\neq x}
    \left(
        [J_{p,q}(y,x)]_+
        -
        [J_{p,q}(x,y)]_+
    \right).
\end{align*}
Using antisymmetry,
\(J_{p,q}(y,x)=-J_{p,q}(x,y)\), together with the scalar identity
\[
    [-r]_+-[r]_+
    =
    -r,
\]
we obtain
\begin{align*}
    (qA_{p,q})(x)
    &=
    -\sum_{y\neq x}J_{p,q}(x,y)
    \\
    &=
    -\sum_{y\in\X}J_{p,q}(x,y)
    \\
    &=
    q(x)\Delta_{p,q}(x),
\end{align*}
where \(J_{p,q}(x,x)=0\). This proves
\eqref{eq:appendix-generator-law-identity}.

It remains to verify the discrete-time step. For \(x\neq y\),
\[
    P_{\eta,p,q}(x,y)
    =
    \eta a_{p,q}(x,y)
    \geq0,
\]
while
\[
    P_{\eta,p,q}(x,x)
    =
    1-
    \eta\sum_{y\neq x}a_{p,q}(x,y)
    \geq0
\]
by~\eqref{eq:particle-cfl-condition}. The rows of \(P_{\eta,p,q}\) sum to
one because the rows of \(A_{p,q}\) sum to zero. Thus \(P_{\eta,p,q}\) is
a Markov transition kernel. Finally,
\[
    qP_{\eta,p,q}
    =
    q(I+\eta A_{p,q})
    =
    q+\eta qA_{p,q}
    =
    q+\eta q\odot\Delta_{p,q},
\]
which proves~\eqref{eq:appendix-particle-euler-identity}.
\end{proof}

\begin{proof}[Proof of Proposition~\ref{prop:particle-lift}]
For \(q=q_g\), define
\begin{equation}
\label{eq:appendix-p-eta-g-definition}
    A_{p,g}
    :=
    A_{p,q_g},
    \qquad
    P_{\eta,g}
    :=
    P_{\eta,p,q_g}
    =
    I+\eta A_{p,q_g}.
\end{equation}
By Lemma~\ref{lem:positive-part-generator} and the step-size condition~\eqref{eq:particle-cfl-condition}, evaluated at $q = q_{g}$, 
\(P_{\eta,g}\) is a Markov transition kernel.

The law-level and logit-level drifts coincide at \(q_g\). Indeed,
\[
    \rho_g(x)
    =
    \frac{e^{g(x)}}{Z_\pi(g)}
\]
implies
\begin{align*}
    \delta_{q_g}(x,y)
    &=
    \log\rho_g(y)-\log\rho_g(x)
    \\
    &=
    g(y)-g(x)
    \\
    &=
    \delta_g(x,y).
\end{align*}
Using~\eqref{eq:law-level-current}, we therefore obtain
\begin{align*}
    \Delta_{p,q_g}(x)
    &=
    -\frac{1}{q_g(x)}
    \sum_{y\in\X}J_{p,q_g}(x,y)
    \\
    &=
    \sum_{y\in\X}
        K(x,y)L\bigl(\delta_g(x,y)\bigr)
        \left(
            \delta_g(x,y)-s_p(x,y)
        \right)
    \\
    &=
    \Delta_g(x).
\end{align*}

The exact law-level Euler identity from
Lemma~\ref{lem:positive-part-generator} consequently becomes
\begin{equation}
\label{eq:appendix-particle-law-at-g}
    q_gP_{\eta,g}
    =
    q_g+\eta q_g\odot\Delta_g.
\end{equation}
On the other hand, Proposition~\ref{prop:chart-euler-consistency} gives
\begin{equation}
\label{eq:appendix-logit-law-at-g}
    q_{g+\eta\Delta_g}
    =
    q_g+\eta q_g\odot\Delta_g
    +
    O(\eta^2).
\end{equation}
Comparing the two expressions yields
\[
    q_{g+\eta\Delta_g}
    =
    q_gP_{\eta,g}
    +
    O(\eta^2),
\]
which proves~\eqref{eq:particle-chart-consistency}.
\end{proof}

The positive-part realization has an additional useful property: among all
nonnegative pairs of directional rates producing the same signed current
across an edge, it introduces the least total particle traffic. We make this
precise next.

\begin{proposition}[Edgewise minimality of the positive-part realization]
\label{prop:edgewise-minimality}
Fix strictly positive probability mass functions \(p,q\) and distinct
states \(x,y\in\X\). Suppose that \(b(x,y),b(y,x)\geq0\) are alternative
jump rates satisfying
\begin{equation}
\label{eq:appendix-alternative-current-constraint}
    q(x)b(x,y)-q(y)b(y,x)
    =
    J_{p,q}(x,y).
\end{equation}
Then there exists a unique \(r_{x,y}\geq0\) such that
\begin{align}
    q(x)b(x,y)
    &=
    [J_{p,q}(x,y)]_+
    +
    r_{x,y},
\label{eq:appendix-forward-counterflow}
    \\
    q(y)b(y,x)
    &=
    [-J_{p,q}(x,y)]_+
    +
    r_{x,y}.
\label{eq:appendix-backward-counterflow}
\end{align}
Consequently, the total probability traffic across the edge \(\{x,y\}\) satisfies
\begin{equation}
\label{eq:appendix-edge-activity-lower-bound}
    q(x)b(x,y)+q(y)b(y,x)
    =
    |J_{p,q}(x,y)|+2r_{x,y}
    \geq
    |J_{p,q}(x,y)|.
\end{equation}
Equality holds if and only if
\[
    b(x,y)=a_{p,q}(x,y),
    \qquad
    b(y,x)=a_{p,q}(y,x).
\]
\end{proposition}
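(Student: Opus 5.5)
The argument is elementary: it reduces to the scalar decomposition of a real number into positive and negative parts. Abbreviate $J:=J_{p,q}(x,y)$, $F:=q(x)b(x,y)\ge 0$ and $B:=q(y)b(y,x)\ge 0$, so that the constraint \eqref{eq:appendix-alternative-current-constraint} reads $F-B=J$. We define the candidate counterflow as
\[
r_{x,y}:=F-[J]_+ .
\]
Then \eqref{eq:appendix-forward-counterflow} holds by definition. For \eqref{eq:appendix-backward-counterflow}, we use the scalar identity $[J]_+-[-J]_+=J$ (the same one already invoked, with a sign change, in the proof of Lemma~\ref{lem:positive-part-generator}). It gives $B=F-J=[J]_++r_{x,y}-J=[-J]_++r_{x,y}$. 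Uniqueness is immediate, because \eqref{eq:appendix-forward-counterflow} alone determines $r_{x,y}$.

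The only point needing care is the sign $r_{x,y}\ge 0$. We split into cases on the sign of $J$. If $J\ge 0$, then $r_{x,y}=F-J=B\ge 0$. If $J<0$, then $r_{x,y}=F\ge 0$. In both cases $r_{x,y}=\min(F,B)\ge 0$, which also gives the cleaner symmetric description of $r_{x,y}$ as the common counterflow.

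Adding the two displayed identities and using $[J]_++[-J]_+=|J|$ gives
\[
F+B=|J|+2r_{x,y}\ge |J|,
\]
which is \eqref{eq:appendix-edge-activity-lower-bound}. Equality holds exactly when $r_{x,y}=0$, that is, when $q(x)b(x,y)=[J]_+$ and $q(y)b(y,x)=[-J]_+$. To compare with $a_{p,q}$, we note that antisymmetry \eqref{eq:appendix-current-antisymmetry} gives $[-J_{p,q}(x,y)]_+=[J_{p,q}(y,x)]_+$. Dividing by the strictly positive masses $q(x)$ and $q(y)$ and using the definition \eqref{eq:minimal-jump-rate}, these conditions become $b(x,y)=a_{p,q}(x,y)$ and $b(y,x)=a_{p,q}(y,x)$. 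The converse direction (the minimal rates attain equality) follows by the same substitution.

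There is no real obstacle; the proof is pure bookkeeping. The one place to be careful is the equality case: strict positivity of $q$ is what allows us to pass from traffic identities to identities of rates. Antisymmetry of the current is what identifies the backward positive part with $a_{p,q}(y,x)$.
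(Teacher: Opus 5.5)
Your proposal is correct and follows essentially the same route as the paper. Both reduce to the scalar positive/negative-part decomposition of $J_{p,q}(x,y)$, identify the counterflow as $r_{x,y}=\min\{q(x)b(x,y),\,q(y)b(y,x)\}$, and use strict positivity of $q$ together with antisymmetry of the current to pass from $r_{x,y}=0$ to $b=a_{p,q}$. The only cosmetic difference is that you define $r_{x,y}:=q(x)b(x,y)-[J_{p,q}(x,y)]_+$ up front, which makes uniqueness immediate, whereas the paper constructs $r_{x,y}$ separately in the cases $J_{p,q}(x,y)\geq 0$ and $J_{p,q}(x,y)<0$.
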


\begin{proof}
Set
\[
    u:=q(x)b(x,y),
    \qquad
    v:=q(y)b(y,x),
    \qquad
    J:=J_{p,q}(x,y).
\]
Then \(u,v\geq0\), and the current constraint becomes
\[
    u-v=J.
\]

We first characterize all nonnegative pairs \((u,v)\) satisfying this
constraint. If \(J\geq0\), then
\[
    [J]_+=J,
    \qquad
    [-J]_+=0,
\]
and \(u-v=J\) implies \(u=J+v\). Hence, setting
\(r_{x,y}:=v\geq0\),
\[
    u=[J]_++r_{x,y},
    \qquad
    v=[-J]_++r_{x,y}.
\]
If \(J<0\), then
\[
    [J]_+=0,
    \qquad
    [-J]_+=-J,
\]
and \(u-v=J\) is equivalently \(v=-J+u\). Setting
\(r_{x,y}:=u\geq0\) again gives
\[
    u=[J]_++r_{x,y},
    \qquad
    v=[-J]_++r_{x,y}.
\]
Thus, in either case,
\[
    r_{x,y}=\min\{u,v\},
\]
which also shows that \(r_{x,y}\) is uniquely determined.

Adding the two directional traffic terms yields
\[
    u+v
    =
    [J]_++[-J]_++2r_{x,y}
    =
    |J|+2r_{x,y}.
\]
Since \(r_{x,y}\geq0\), the total traffic is bounded below by \(|J|\),
with equality if and only if \(r_{x,y}=0\).

When \(r_{x,y}=0\),
\[
    u=[J]_+,
    \qquad
    v=[-J]_+.
\]
Recalling the definitions of \(u\) and \(v\), this gives
\[
    b(x,y)
    =
    \frac{[J_{p,q}(x,y)]_+}{q(x)}
    =
    a_{p,q}(x,y),
\]
and, using antisymmetry of the current,
\[
    b(y,x)
    =
    \frac{[-J_{p,q}(x,y)]_+}{q(y)}
    =
    \frac{[J_{p,q}(y,x)]_+}{q(y)}
    =
    a_{p,q}(y,x).
\]
Therefore the positive-part construction is the unique edgewise
realization attaining the minimum total directional traffic.
\end{proof}

The minimal realization also has the expected behavior at equilibrium:
once the model law reaches the target, not only does the marginal
distribution stop evolving, but every particle jump rate vanishes.

\begin{corollary}[Pathwise stationarity at the target]
\label{cor:pathwise-stationarity-at-target}
If \(q=p\), then
\[
    J_{p,p}(x,y)=0,
    \qquad
    a_{p,p}(x,y)=0
\]
for all \(x\neq y\). Hence
\[
    A_{p,p}=0,
    \qquad
    P_{\eta,p,p}=I,
\]
and particles remain at their current states almost surely.
\end{corollary}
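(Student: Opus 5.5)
The plan is to show that the current vanishes identically when $q=p$. Everything else then follows from the definitions of the minimal rates, the generator and the frozen kernel.

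First, set $q=p$ in \eqref{eq:conductance-and-current}. Then $\rho_q=\rho_p$, so $\log(\rho_q(x)/\rho_p(x))=0$ for every $x\in\X$, and the bracketed difference of log-ratios is zero on every edge. The prefactor $\pi(x)K(x,y)m_{\rho_p}(x,y)$ is finite because $\rho_p>0$. Hence $J_{p,p}(x,y)=0$ for all $x,y$. Equivalently, in the representation \eqref{eq:law-level-current} we have $\delta_p(x,y)=\log\rho_p(y)-\log\rho_p(x)=s_p(x,y)$, so the factor $s_p-\delta_q$ vanishes.

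Next, the minimal rates. By \eqref{eq:minimal-jump-rate}, $a_{p,p}(x,y)=[J_{p,p}(x,y)]_+/p(x)=0$ for $x\neq y$. Strict positivity of $p$ makes the division legitimate. Then \eqref{eq:particle-generator} gives zero off-diagonal entries, and the diagonal entries, being minus the row sums of these zeros, also vanish. So $A_{p,p}=0$ and $P_{\eta,p,p}=I+\eta\cdot 0=I$.

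The step-size condition \eqref{eq:particle-cfl-condition} holds trivially for every $\eta$, since its left-hand side is $0$. So $P_{\eta,p,p}$ is a valid Markov kernel by Lemma~\ref{lem:positive-part-generator}, and it is the identity. The pathwise statement then follows: a particle at state $x$ stays at $x$ with probability $P_{\eta,p,p}(x,x)=1$. Likewise, the continuous-time jump process with generator $A_{p,p}=0$ has zero total jump rate out of each state, so it makes no jumps almost surely.

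There is no real obstacle here. The only point needing care is to note that positivity of $p$ keeps both the mobility and the division by $q(x)=p(x)$ well defined, so that "zero current" genuinely forces "zero rate". One might also remark that this is sharper than the Eulerian stationarity $\dot q=0$, which only requires the net outflow $\sum_y J(x,y)$ to vanish at each state. By Proposition~\ref{prop:edgewise-minimality}, any non-minimal realization could still carry equal counterflow $r_{x,y}>0$ at equilibrium, whereas the minimal one has none.
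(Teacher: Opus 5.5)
Your proof is correct and follows essentially the same route as the paper: setting $q=p$ makes $\log(\rho_q/\rho_p)\equiv 0$, so the current vanishes on every edge. Hence the positive-part rates, the generator $A_{p,p}$ and the increment $\eta A_{p,p}$ all vanish, giving $P_{\eta,p,p}=I$. Your extra remarks are accurate and slightly more explicit than the paper's version: the step-size condition holds trivially for every $\eta$, and non-minimal realizations can still carry counterflow at equilibrium.
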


\begin{proof}
If \(q=p\), then \(\rho_q=\rho_p\), and hence
\[
    \log\frac{\rho_q(x)}{\rho_p(x)}
    =
    0
    \qquad
    \text{for every }x\in\X.
\]
The current~\eqref{eq:conductance-and-current} therefore vanishes
identically. It follows immediately from
\eqref{eq:minimal-jump-rate} that every off-diagonal jump rate is zero, so
\(A_{p,p}=0\) and \(P_{\eta,p,p}=I\). Thus the particle realization is
pathwise stationary at the target.
\end{proof}